\newcommand{\ra}[1]{\renewcommand{\arraystretch}{#1}}
\theoremstyle{plain}
\newtheorem{theorem}{Theorem}[section]
\theoremstyle{definition}
\theoremstyle{remark}
\newtheorem{remark}[theorem]{Remark}
\icmltitlerunning{Generative Models with Information-Theoretic Protection Against Membership Inference Attacks}
\begin{document}

\twocolumn[
\icmltitle{Generative Models with Information-Theoretic Protection\\ Against Membership Inference Attacks}

% It is OKAY to include author information, even for blind
% submissions: the style file will automatically remove it for you
% unless you've provided the [accepted] option to the icml2022
% package.

% List of affiliations: The first argument should be a (short)
% identifier you will use later to specify author affiliations
% Academic affiliations should list Department, University, City, Region, Country
% Industry affiliations should list Company, City, Region, Country

% You can specify symbols, otherwise they are numbered in order.
% Ideally, you should not use this facility. Affiliations will be numbered
% in order of appearance and this is the preferred way.
%\icmlsetsymbol{equal}{*}

\begin{icmlauthorlist}
\icmlauthor{Parisa Hassanzadeh}{jpmc}
\icmlauthor{Robert E. Tillman}{ie}
\end{icmlauthorlist}

\icmlaffiliation{jpmc}{J.P. Morgan AI Research, San Francisco, CA, USA.}
\icmlaffiliation{ie}{Index Exchange, New York, NY, USA. This work was done when Robert E. Tillman was with J.P. Morgan AI Research.}

\icmlcorrespondingauthor{parisa.hassanzadeh}{parisa.hassanzadeh@jpmorgan.com}
\icmlcorrespondingauthor{Robert Tillman}{ rtillman@alumni.cmu.edu}

% You may provide any keywords that you
% find helpful for describing your paper; these are used to populate
% the "keywords" metadata in the PDF but will not be shown in the document
\icmlkeywords{deep generative models, adversarial learning & robustness, privacy-aware machine learning}

\vskip 0.3in
]

% this must go after the closing bracket ] following \twocolumn[ ...

% This command actually creates the footnote in the first column
% listing the affiliations and the copyright notice.
% The command takes one argument, which is text to display at the start of the footnote.
 
% Remove it (just {}) if you do not need this facility.

\printAffiliationsAndNotice{}  % leave blank if no need to mention equal contribution
%\printAffiliationsAndNotice{\icmlEqualContribution} % otherwise use the standard text.

% \graphicspath{{workshop_2022/}}
% Keywords command
\providecommand{\keywords}[1]
{
  \small	
  \textbf{\textit{Keywords---}} #1
}

\begin{abstract}
Deep generative models, such as Generative Adversarial Networks (GANs), synthesize diverse high-fidelity data samples by estimating the underlying distribution of high dimensional data. Despite their success, GANs may disclose private information from the data they are trained on, making them susceptible to adversarial attacks such as membership inference attacks, in which an adversary aims to determine if a record was part of the training set. We propose an information theoretically motivated regularization term that prevents the generative model from overfitting to training data and encourages generalizability. We show that this penalty minimizes the Jensen–Shannon divergence  between  components of the generator trained on data with different membership, and that it can be implemented at low cost using an additional classifier.  Our experiments on image datasets demonstrate that with the proposed regularization, which comes at only a small added computational cost, GANs are able to preserve privacy and generate high-quality samples that achieve better downstream classification performance  compared to  non-private and differentially private generative models.
\end{abstract}

\keywords{Deep Generative Models, Adversarial Learning \& Robustness, Privacy-Aware Machine Learning}

\section{Introduction}
Generative models for synthetic data are promising approaches addressing the need for large quantities of (often sensitive) data for training machine learning models. This need is especially  pronounced  in  domains with strict privacy-protecting regulations, such as healthcare and finance, as well as domains with data scarcity issues or where collecting data is expensive, such as autonomous driving. Given a set of training samples, generative models \cite{hinton2006reducing,kingma2013auto,goodfellow2014generative} approximate the data generating distribution from which new samples can be generated.

Deep generative models such as Generative Adversarial Networks (GANs) and Variational Autoencoders (VAEs) have shown great potential for synthesizing  samples with high-fidelity \cite{arjovsky2017wasserstein,radford2015unsupervised,brock2018large}, with various applications in image super-resolution, image-to-image translation, object detection and text-to-image synthesis. However, recent studies \cite{hayes2019logan,hilprecht2019monte,chen2020gan} have shown that generative models can leak sensitive information from the training data, making them vulnerable to privacy attacks. For example, Membership Inference Attacks (MIA), which aim to infer whether a given data record was used for training the model, and active inference attacks in collaborative settings, which reconstruct training samples from the generated ones, were shown to be very successful in \cite{hayes2019logan} and \cite{hitaj2017deep}, respectively. Several paradigms have been proposed for defending against such attacks, including differentially private  mechanisms that ensure a specified level of privacy protection for the training records \cite{DPGAN,jordon2018pate,liu2019ppgan}. Other defense frameworks prevent the memorization of training data by using regularization such as weight normalization and dropout training \cite{hayes2019logan}, or with adversarial regularization by using an internal privacy discriminator as in  \cite{mukherjee2021privgan}.

\paragraph{Contributions:} 
In this paper, we focus on deep generative models and propose a new mechanism to provide privacy protection against membership inference attacks.  Specifically, our contributions are as follows:
\begin{itemize}
    \item We propose a modification to the GAN objective which encourages learning more generalizable representations that are less vulnerable to MIAs. To prevent memorization of the training set, we train the generator on different subsets of the data while penalizing learning which  subset a given training instance is from. This penalty is quantified by the mutual information between the generated samples and a latent code that represents the subset membership of training samples.
    \item We show that the proposed information-theoretic regularization is equivalent to minimizing the divergence between the generative  distribution learned from training on different subsets of data and can be implemented using a classifier that interacts with the generator.  
    \item We demonstrate that the proposed privacy-preserving mechanism requires less training data and has lower computational cost compared to previously proposed approaches.
    \item We empirically evaluate our proposed model on benchmark image 
    datasets (Fashion-MNIST, CIFAR-10), and demonstrate its effective defense against MIAs without significant compromise in generated sample quality and downstream task performance. We compare the privacy-fidelity trade-off of our proposed model to non-private models as well as other privacy-protecting mechanisms, and show that our model achieves better trade-offs with negligible added computational cost compared to its non-private counterparts.
    %\item \textcolor{blue}{To  Do  for  Parisa: Still needs to be implemented, may be removed} We show that regularizing the generator in GANs provides better protection against adversaries that only have access to a synthetic dataset generated by the model, while  regularizing  the discriminator achieves better privacy protection levels when the adversary has full access to the internal model parameters.
\end{itemize}

 \section{Background}
\subsection{Generative Adversarial Networks}
GANs train deep generative models through a minimax game between a generative model $G$ and a discriminative model $D$. The generator learns a mapping $G(\mathbf{z})$ from a prior noise distribution $p_{\mathbf{z}}(z)$ to the data space, such that the generator distribution $p_{g}(x)$ matches the data distribution $p_{\mathbf{x}}(x)$. The  discriminator is trained to correctly distinguish between data samples and synthesized samples, and assigns a value $D(\mathbf{x})$ representing its confidence that sample $\mathbf{x}$ came from the training data rather than the generator. $G$ and $D$ optimize the following objective function
\begin{align}
    \min_{G} &\max_{D}   V(D,G) \coloneqq\notag \\ &\mathbb{E}_{\mathbf{x}\sim p_{\mathbf{x}}} [\log(D(\mathbf{x}))] 
  + \mathbb{E}_{\mathbf{z}\sim p_{\mathbf{z}}} [\log(1- D(G(\mathbf{z})))]. \label{eq:opt}
\end{align}
For a given generator $G$, the optimal discriminator is given by $D^\star({x}) = \frac{p_{\mathbf{x}}({x})}{p_{\mathbf{x}}({x}) + p_{g}({x})}$. Training a GAN minimizes the divergence between the generated and real data distributions \cite{goodfellow2014generative}.

\subsection{Membership Inference Attacks} 
MIAs are privacy attacks on trained machine learning models where the goal of the attacker, who may have limited or full access to the model, is to determine whether a given data point was used for training the model. Based on the extent of information available to an attacker, MIAs are divided into various categories, such as black-box and white-box attacks. \cite{shokri2017membership} focuses on discriminative models in the black-box setting and shows their vulnerability to MIAs using a shadow training technique that trains a classifier to distinguish between the model predictions on members versus non-members of the training set. The authors relate the privacy leakage to  model overfitting and present mitigation strategies against MIAs by restricting the model's prediction power or by using regularization. In \cite{hayes2019logan}, several  successful attacks against GANs are proposed using models that are trained on the generated samples to detect overfitting. 
\cite{hilprecht2019monte} proposes additional MIAs against GANs and VAEs which identify members of the training set using the proximity of a given record to the  generated samples based on Monte Carlo integration. We provide a detailed description of MIAs considered in this paper in Sec.~\ref{sec:evaluation metrics}. The \textit{attack accuracy} of an MIA on a trained model is the fraction of data samples that are correctly inferred as members of the training set.

\subsection{Privacy-Preserving Generative Models} 
\emph{Privacy-preserving} generative models modify their respective model frameworks, e.g. by changing objective functions or training procedures, to reduce the effectiveness of MIAs and other privacy attacks. For example, differentially private GANs \cite{DPGAN,jordon2018pate,liu2019ppgan} provide formal membership privacy guarantees by adding carefully designed noise during the training process. Differentially private generative models, however, often result in low-fidelity  samples unless a low-privacy setting is used. Another approach is adversarial regularization, such as in \cite{mukherjee2021privgan}, which trains multiple generator-discriminator pairs and uses a built–in privacy discriminator that acts as regularization to prevent memorization of the training set. While adversarial regularization does not provide the formal guarantees of differential privacy, \cite{mukherjee2021privgan} shows that the proposed model is able to mitigate MIAs at the cost of training multiple GANs without considerably sacrificing performance in downstream learning tasks.   

In this work, we propose a novel adversarial regularization that defends against MIAs and combats overfitting by regularizing the GAN generator loss, which we empirically show to be effective. We further show that the regularization is equivalent to minimizing the Jensen-Shannon divergence between subsets of the training dataset, resembling the objective function in \cite{mukherjee2021privgan} for the optimal discriminators and optimal privacy discriminator. The work in \cite{hoang2018mgan} uses a somewhat similar objective function in order to mitigate mode-collapse by using multiple generators and an additional classifier. However, contrary to our setup, their regularization term is maximized to encourage generators to specialize in different data modes.

\section{Privacy Preservation Through an Information-Theoretic Objective} 
In order to prevent the generative model from memorizing private information in the training data and to improve its generalization ability, we impose an information-theoretic regularization on the generator $G$. 
Let us assume that the training data is divided into $N$ non-overlapping subsets of samples $\mathbf{x}_i \sim p_{\mathbf{x}_i}(x)$, $i=1,\dots, N$. We indicate the membership of training samples from each distribution with the variable $\mathbf{c}\in\{1,\dots,N\}$, and capture the difference between sample distributions $p_{\mathbf{x}_1},\dots, p_{\mathbf{x}_N}$ through an additional latent code $\mathbf{c}$  provided to both the generator and the discriminator. Specifically, for $\mathbf{c}\sim p_{\mathbf{c}} = \text{Multinomial}(\boldsymbol{\pi})$, $\boldsymbol{\pi}= (\pi_1,\dots,\pi_N)$, where $\pi_i$ denotes the relative frequency of samples from $p_{\mathbf{x}_i}(x)$, the generator $G(\mathbf{z}, \mathbf{c})$ is provided with noise $\mathbf{z}$ and latent code  $\mathbf{c}$, and the discriminator takes a data sample  $\mathbf{x}$ and latent code  $\mathbf{c}$ as inputs. Our goal is for the generator to learn a distribution $p_g(x)$ that matches the underlying data generating distribution $p_{\mathbf{x}}(x)$, 
%common to all subsets $\mathbf{x}_i$, 
rather than sample distributions $\{p_{\mathbf{x}_i}(x)\}_{i=1}^N$. Therefore, we %impose an information-theoretic regularization 
penalize the generator encouraging it to synthesize samples that are independent from the latent code $\mathbf{c}$, by minimizing the mutual information $I(G(\mathbf{z},\mathbf{c});\mathbf{c})$. If $G(\mathbf{z},\mathbf{c})$ and $\mathbf{c}$ are independent, then $I(G(\mathbf{z},\mathbf{c});\mathbf{c})=0$. We propose to solve the following regularized minimax game:
{\small
\begin{align}
    &\min_{G}  \max_{D}   V_{I}(D,G) \coloneqq\mathbb{E}_{(\mathbf{x},\mathbf{c})\sim p_{\mathbf{x},\mathbf{c}}} [\log(D(\mathbf{x}, \mathbf{c}))] \; + \notag \\ 
    &\quad \mathbb{E}_{\mathbf{z}\sim p_{\mathbf{z}}, \mathbf{c}\sim \boldsymbol{\pi}} [\log(1- D(G(\mathbf{z}, \mathbf{c}), \mathbf{c}))] + \lambda  I(G(\mathbf{z},\mathbf{c}); \mathbf{c}), \label{eq:opt info}
\end{align}
}%
where parameter $\lambda$ controls the trade-off between the fidelity of the generative process and its privacy protection. Let the  conditional generative distributions be denoted by $p_{g_i} (x)\coloneqq P_{\mathbf{x}|\mathbf{c}}({x}|{c=i})$, for $\mathbf{x}\sim G(\mathbf{z},\mathbf{c})$.
Then, for independently sampled $\mathbf{z}\sim p_{\mathbf{z}}$ and $\mathbf{c}\sim p_{\mathbf{c}}$, the generator distribution is $p_g = \sum_{i=1}^N \pi_i p_{g_i}$.

%Note that the conditionals are learning specific details in each subset but the penalty makes them converge to the same distribution, i.e., the more common information between them rather their private information.
    
\begin{theorem}
The mutual information regularization term in \eqref{eq:opt info} is equivalent to the Jensen–Shannon divergence (JSD) between   conditional generative distributions  $p_{g_1},\dots,p_{g_N}$. 
\end{theorem}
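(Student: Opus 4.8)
The plan is to unwind both sides of the claimed identity to their defining integrals and observe that they coincide. First I would make the probabilistic setup explicit: since $\mathbf{z}\sim p_{\mathbf{z}}$ is drawn independently of $\mathbf{c}\sim p_{\mathbf{c}}=\mathrm{Multinomial}(\boldsymbol{\pi})$, conditioning on the event $\{\mathbf{c}=i\}$ does not change the law of $\mathbf{z}$, so the random variable $\mathbf{x}=G(\mathbf{z},\mathbf{c})$ has conditional law $P_{\mathbf{x}\mid\mathbf{c}=i}=p_{g_i}$ and, by the law of total probability, marginal law $p_g=\sum_{i=1}^N\pi_i p_{g_i}$ — exactly as recorded just before the theorem. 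This independence is the one place where the modeling assumption is actually used, so I would state it carefully.

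Next I would expand the mutual information using the standard decomposition $I(\mathbf{x};\mathbf{c})=\mathbb{E}_{\mathbf{c}}\!\left[D_{\mathrm{KL}}\!\left(P_{\mathbf{x}\mid\mathbf{c}}\,\|\,P_{\mathbf{x}}\right)\right]$, valid because $\mathbf{c}$ is discrete, to get
\[
I(G(\mathbf{z},\mathbf{c});\mathbf{c}) = \sum_{i=1}^N \pi_i \int p_{g_i}(x)\log\frac{p_{g_i}(x)}{p_g(x)}\,dx = \sum_{i=1}^N \pi_i\, D_{\mathrm{KL}}\bigl(p_{g_i}\,\big\|\,p_g\bigr).
\]
Then I would recall the definition of the $\boldsymbol{\pi}$-weighted generalized Jensen--Shannon divergence of $N$ distributions,
\[
\mathrm{JSD}_{\boldsymbol{\pi}}(p_{g_1},\dots,p_{g_N}) \coloneqq \sum_{i=1}^N \pi_i\, D_{\mathrm{KL}}\Bigl(p_{g_i}\,\Big\|\,\textstyle\sum_{j=1}^N \pi_j p_{g_j}\Bigr) = H\Bigl(\textstyle\sum_{i}\pi_i p_{g_i}\Bigr) - \sum_i \pi_i H(p_{g_i}),
\]
and note that, because $p_g=\sum_j\pi_j p_{g_j}$, the right-hand side of the mutual-information display is literally $\mathrm{JSD}_{\boldsymbol{\pi}}(p_{g_1},\dots,p_{g_N})$. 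I would also verify the entropy form, either directly from $I(\mathbf{x};\mathbf{c})=H(\mathbf{x})-H(\mathbf{x}\mid\mathbf{c})$ with $H(\mathbf{x})=H(p_g)$ and $H(\mathbf{x}\mid\mathbf{c})=\sum_i\pi_i H(p_{g_i})$, or by writing $D_{\mathrm{KL}}(p_{g_i}\|p_g)=-H(p_{g_i})-\int p_{g_i}\log p_g$ and summing against $\pi_i$; including it connects the regularizer in $V_I$ with the later classifier-based implementation.

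This is essentially a chain of definitions, so I do not expect a genuine obstacle. The only two points requiring care are: (i) being explicit that the independence of $\mathbf{z}$ and $\mathbf{c}$ is what lets one identify the conditional law of $G(\mathbf{z},\mathbf{c})$ given $\mathbf{c}=i$ with $p_{g_i}$; and (ii) pinning down which convention for the $N$-ary JSD is meant, since the classical two-distribution version carries extra factors of $\tfrac12$. I would fix the $\boldsymbol{\pi}$-weighted mixture convention above so that the statement is an exact equality, noting that any constant rescaling is harmless as it is absorbed into $\lambda$. Regularity is not an issue: all the integrals are well-defined in $[0,\infty]$ whenever the conditional densities exist, and $I(\mathbf{x};\mathbf{c})\ge 0$ recovers nonnegativity of the JSD as a sanity check.
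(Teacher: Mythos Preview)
Your proposal is correct and follows essentially the same route as the paper's proof: expand $I(G(\mathbf{z},\mathbf{c});\mathbf{c})$ as $\mathbb{E}_{\mathbf{c}}[\mathrm{KL}(P_{\mathbf{x}\mid\mathbf{c}}\,\|\,P_{\mathbf{x}})]=\sum_i\pi_i\,\mathrm{KL}(p_{g_i}\|p_g)$, substitute $p_g=\sum_j\pi_j p_{g_j}$, and recognize the $\boldsymbol{\pi}$-weighted generalized JSD. You are simply more explicit than the paper about the independence of $\mathbf{z}$ and $\mathbf{c}$ and about fixing the $N$-ary JSD convention, which is helpful but not a different argument.
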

\vspace{-0.2cm}
\begin{proof}
Let $\text{KL}(P_1||P_2)$ denote the Kullback–Leibler (KL) divergence between probability distributions $P_1$ and $P_2$. Then, from the definition of mutual information and by conditioning random variable $G(\mathbf{z},\mathbf{c})$ on $\mathbf{c}$, it follows  that  
\begin{align}
   I(G(\mathbf{z},\mathbf{c}&); \mathbf{c} )
%   & =\mathbb{E}_{\mathbf{c}\sim p_{\mathbf{c}}}\Big[ \text{KL} \Big(P_{\mathbf{x}\sim p_g}({x}|{c})||P_{\mathbf{x}\sim p_g}(x)  \Big) \Big]\notag\\
%   & =\mathbb{E}_{\mathbf{c}\sim p_{\mathbf{c}}}\Big[ \text{KL} \Big(P_{\mathbf{x}|\mathbf{c}\sim \textcolor{blue}{G(\mathbf{z},{c})?!} }({x}|{c})||P_{\mathbf{x}\sim G(\mathbf{z},\mathbf{c})}(x)  \Big) \Big]\notag\\
 =\mathbb{E}_{\mathbf{c}\sim p_{\mathbf{c}}}\Big[ \text{KL} \Big(P_{G(\mathbf{z},\mathbf{c})|\mathbf{c}}({x}|{c})||P_{  G(\mathbf{z},\mathbf{c})}(x)  \Big) \Big]\notag\\
   %& =  \sum_{i=1}^N  \pi_i \text{KL} \Big(P_{\mathbf{x}\sim p_g}(x|i)|| p_g(x))  \Big)\\
   & =  \sum\nolimits_{i=1}^N  \pi_i \text{KL} \Big(  p_{g_{i}}(x)|| p_g(x) \Big)\\
   & =  \sum\nolimits_{i=1}^N  \pi_i \text{KL} \Big(  p_{g_{i}}(x)||  \sum\nolimits_{j=1}^N \pi_j p_{g_{j}}(x)  \Big) \label{eq:jsd}\\
   & = \text{JSD}\Big(p_{g_1}(x),\dots,p_{g_N}(x)\Big),
\end{align}
where $\text{JSD}(P_1,\dots,P_N)$ denotes the JSD between distributions $P_1,\dots, P_N$.
%\textcolor{red}{To Do for Parisa: make sure mixed continuous-discrete RVs don't create any problems here} %https://papers.nips.cc/paper/2017/file/ef72d53990bc4805684c9b61fa64a102-Paper.pdf
\qedhere
\end{proof}
\vspace{-0.2cm}
 We refer to  GANs  trained with respect to \eqref{eq:opt info} as \textit{Private Information-Theoretic Generative Adversarial Networks (PIGAN)}. 

\begin{theorem}
For the optimal discriminator, the global optimum of value function  \eqref{eq:opt info} is $-\log{4}$ which is achieved if and only if $p_{\mathbf{x}_1}(x) = \dots=p_{\mathbf{x}_N}(x)=p_{g_1}(x) =\dots= p_{g_N}(x)$.  
\end{theorem}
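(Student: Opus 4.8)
The plan is to follow the classical GAN optimality analysis \cite{goodfellow2014generative}, adapted to the conditional game with latent code $\mathbf{c}$ and the extra mutual-information penalty, and then to fold in the identity established in the preceding theorem.

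First I would fix $G$ and maximize $V_I(D,G)$ over $D$. Since $\lambda I(G(\mathbf{z},\mathbf{c});\mathbf{c})$ does not involve $D$, this amounts to maximizing the first two expectations. Writing $p_{\mathbf{x},\mathbf{c}}(x,i)=\pi_i\,p_{\mathbf{x}_i}(x)$ and using that a generated sample conditioned on $\mathbf{c}=i$ is distributed as $p_{g_i}$, these two terms equal $\sum_{i=1}^N \pi_i\big(\mathbb{E}_{p_{\mathbf{x}_i}}[\log D(x,i)]+\mathbb{E}_{p_{g_i}}[\log(1-D(x,i))]\big)$. For each fixed $i$ and each $x$ the integrand has the form $a\log y+b\log(1-y)$ with $a=p_{\mathbf{x}_i}(x)$ and $b=p_{g_i}(x)$, which is maximized at $y=a/(a+b)$; hence $D^\star(x,i)=p_{\mathbf{x}_i}(x)/\big(p_{\mathbf{x}_i}(x)+p_{g_i}(x)\big)$, the per-code analogue of the standard formula (its value off the supports is immaterial).

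Next I would substitute $D^\star$ and simplify. Each summand becomes $\mathbb{E}_{p_{\mathbf{x}_i}}\!\big[\log\tfrac{p_{\mathbf{x}_i}}{p_{\mathbf{x}_i}+p_{g_i}}\big]+\mathbb{E}_{p_{g_i}}\!\big[\log\tfrac{p_{g_i}}{p_{\mathbf{x}_i}+p_{g_i}}\big]$, and the usual trick of adding and subtracting $\log 2$ inside each expectation rewrites this as $2\,\text{JSD}(p_{\mathbf{x}_i}\|p_{g_i})-\log 4$. Summing against $\pi_i$ (recall $\sum_i\pi_i=1$) collapses the first two terms of $V_I(D^\star,G)$ to $-\log 4 + 2\sum_{i=1}^N\pi_i\,\text{JSD}(p_{\mathbf{x}_i}\|p_{g_i})$. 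For the last term I would invoke the preceding theorem, which identifies $\lambda I(G(\mathbf{z},\mathbf{c});\mathbf{c})$ with $\lambda\,\text{JSD}(p_{g_1},\dots,p_{g_N})$, the $\boldsymbol{\pi}$-weighted $N$-way Jensen--Shannon divergence, which is nonnegative and equals $0$ iff $p_{g_1}=\dots=p_{g_N}$ almost everywhere. Combining,
\begin{align*}
V_I(D^\star,G) = -\log 4 &+ 2\sum_{i=1}^N \pi_i\,\text{JSD}(p_{\mathbf{x}_i}\|p_{g_i}) \\
&+ \lambda\,\text{JSD}(p_{g_1},\dots,p_{g_N}) \;\ge\; -\log 4,
\end{align*}
since every divergence on the right-hand side is nonnegative.

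Finally I would read off the equality case: with $\lambda>0$ and each $\pi_i>0$, equality forces simultaneously $p_{g_i}=p_{\mathbf{x}_i}$ for all $i$ and $p_{g_1}=\dots=p_{g_N}$, and chaining these equalities yields $p_{\mathbf{x}_1}=\dots=p_{\mathbf{x}_N}=p_{g_1}=\dots=p_{g_N}$; conversely, under this chain all divergences vanish and $V_I(D^\star,G)=-\log 4$, so the bound is attained. The computation itself is routine; the main obstacle is the bookkeeping between the joint law $p_{\mathbf{x},\mathbf{c}}$ and the per-code conditionals, so that the $\max_D$ genuinely decouples across $c$, and keeping the pairwise divergences $\text{JSD}(p_{\mathbf{x}_i}\|p_{g_i})$ distinct from the $N$-way divergence $\text{JSD}(p_{g_1},\dots,p_{g_N})$ supplied by the preceding theorem. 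A minor point worth a remark is attainability: $-\log 4$ is a genuine minimum only when the data-side constraints $p_{\mathbf{x}_1}=\dots=p_{\mathbf{x}_N}$ are satisfiable, which holds under the paper's assumption that the subsets come from partitioning samples of a common distribution.
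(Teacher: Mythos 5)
Your proposal is correct and follows essentially the same route as the paper: condition the value function on the code $\mathbf{c}$, obtain the per-code optimal discriminator $D^\star(x,i)=p_{\mathbf{x}_i}(x)/(p_{\mathbf{x}_i}(x)+p_{g_i}(x))$, reduce the first two terms to $-\log 4 + 2\sum_i\pi_i\,\text{JSD}(p_{\mathbf{x}_i}\|p_{g_i})$, invoke the preceding theorem for the $\lambda\,\text{JSD}(p_{g_1},\dots,p_{g_N})$ term, and read off the equality case. The only difference is that you spell out the Goodfellow-style pointwise maximization and the $\log 2$ bookkeeping where the paper cites Proposition 1 and Theorem 1 of Goodfellow et al., and you add the (correct, implicitly assumed) caveat that the ``only if'' direction uses $\lambda>0$ and $\pi_i>0$.
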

\begin{proof}
By conditioning the data generating distribution on $\mathbf{c}$, value function $V_{I}(D,G)$ can be rewritten as 
{\small
\begin{align}
 V_{I}&(D,G) = \mathbb{E}_{\mathbf{c}\sim \boldsymbol{\pi}}\mathbb{E}_{\mathbf{x}\sim p_{\mathbf{x}_{\mathbf{c}}}} [\log(D(\mathbf{x}, \mathbf{c}))] \;+ \notag \\ 
    & \mathbb{E}_{\mathbf{c}\sim \boldsymbol{\pi}}\mathbb{E}_{\mathbf{x}\sim p_{{g_{\mathbf{c}}
    } }} [\log(1-D(\mathbf{x}, \mathbf{c}))]  
      +  \lambda \text{JSD}\Big(p_{g_1}(x),\dots,p_{g_N}(x)\Big) \notag
\end{align}
}%
Based on Proposition 1 in  \cite{goodfellow2014generative}, for a fixed generator $G(\mathbf{z},\mathbf{c})$, the  discriminator %$D^\star(\mathbf{x},\mathbf{c}) = \frac{p_{(\mathbf{x},\mathbf{c})}({x,c})}{p_{(\mathbf{x},\mathbf{c})}({x,c}) + p_{g}({x,c})}$  
$D^\star(\mathbf{x},\mathbf{c}) = \frac{p_{\mathbf{x}_{\mathbf{c}}}({x})}{p_{\mathbf{x}_{\mathbf{c}}}({x}) + p_{g_{\mathbf{c}}}({x})}$ maximizes \eqref{eq:opt info}. For ${c}=\{1,\dots,N\}$, if $p_{\mathbf{x}_{{c}}}(x)=p_{g_{{c}}}(x)$,  we have $D^\star(\mathbf{x},{c}) =\frac{1}{2}$, and if $p_{g_1}(x)=\dots=p_{g_N}(x)$, then $\text{JSD}\Big(p_{g_1}(x),\dots,p_{g_N}(x)\Big)=0$. Therefore, $V_{I}(D^\star,G) = - \log{4}$.

To find the global optimum, by replacing $D^\star$ in $V_{I}(D,G)$, it follows from Theorem 1 in  \cite{goodfellow2014generative}  that
{\small
\begin{align}
    \min_{G}  V_{I}(D^\star,G) = 
    \sum\nolimits_{i=1}^N & \pi_i   \Big( -\log{4} + 2 \text{JSD} \Big( p_{\mathbf{x}_i}(x)|| p_{g_i}(x)\Big) \Big)\notag\\
    & + \lambda \text{JSD}\Big(p_{g_1}(x),\dots,p_{g_N}(x)\Big) \label{eq:opt Dstar}
\end{align}
}%
Since the JSD  between two or more distributions is always non-negative and zero
only when they are equal, the global minimum of $V_{I}(D^\star,G)$ is $-\log{4}$, which is achieved only when $p_{\mathbf{x}_1}(x) = \dots=p_{\mathbf{x}_N}(x)=p_{g_1}(x) =\dots= p_{g_N}(x)$. At this point,  the generative model conditioned on each membership code $\mathbf{c}$ perfectly replicates the data generating process.
\end{proof}

\begin{remark}
PIGAN is trained to minimize the JSD between generated sample distributions trained on different subsets of the training set. For the optimal discriminator $D^\star$, the objective function in \eqref{eq:opt Dstar} is closely related to the objective function of \textit{PrivGAN}, the privacy-preserving model proposed in \cite{mukherjee2021privgan}. It was shown in \cite{mukherjee2021privgan} that given the $N$ optimal discriminators and the optimal privacy discriminator of PrivGAN, its objective function is equivalent to the JSD between the $N$ generators' distributions,  differing from PIGAN's value function only in constant multiplicative and additive terms.
\end{remark}

\begin{remark}
The privacy penalty in PIGAN  resembles the variational regularization term used in InfoGAN \cite{chen2016infogan}. While both are based on the mutual information, the two terms are being used in completely different ways and, in fact, have opposing impacts on the GAN solution. InfoGAN \emph{maximizes} the mutual information term to learn disentangled representations in the latent space; latent codes represent learnt semantic features of the data. PIGAN, however, uses latent codes to explicitly represent membership when the data is randomly divided into different groups and \emph{minimizes} the mutual information quantity so that group membership is private.
\end{remark}
While we present the proposed privacy-preserving regularization for the GAN framework, the  method can be readily applied to other deep generative methods such as VAEs, by providing the membership latent code as additional inputs to both the encoder and decoder and minimizing the mutual information between the latent code and  samples generated by the VAE decoder.

\subsection{Implementation} 
Our goal is to minimize the mutual information term in  \eqref{eq:opt info}. By expanding the KL divergence term in \eqref{eq:jsd}, we have
\newcommand\nonumberthis{\refstepcounter{equation}\nonumber}
\begin{align}
 &\sum\nolimits_{i=1}^N  \pi_i \text{KL} \Big(  p_{g_{i}}(x)||  \sum\nolimits_{j=1}^N \pi_j p_{g_{j}}(x)  \Big) \notag\\
 &= \sum_{i=1}^N  \pi_i \mathbb{E}_{\mathbf{x}\sim p_{g_i}}\Big[ \log\Big(\frac{p_{g_i}(x)}{ \sum_{j=1}^N \pi_j p_{g_{j}}(x)}\Big)\Big]\notag\\
 & =%\stackrel{\refstepcounter{equation}}{=}
 \sum_{i=1}^N  \pi_i \mathbb{E}_{\mathbf{x}\sim p_{g_i}}\Big[ \log\Big(\frac{  \pi_i p_{g_i}(x)}{ \sum_{j=1}^N \pi_j p_{g_{j}}(x)}\Big)\Big] - \sum_{i=1}^N  \pi_i \log(\pi_i) %\nonumberthis
 \label{eq:posterior}\\
 & = \sum\nolimits_{i=1}^N  \pi_i \mathbb{E}_{\mathbf{x}\sim p_{g_i}}[ \log( \widehat{\pi}_i (x) )] + H(\mathbf{c}) \label{eq:pihat} %\notag\\
 %&= -\text{CE} (\mathbf{c}, \widehat{\mathbf{c}} ) + H(\mathbf{c}),\notag
 \end{align}
 where \eqref{eq:posterior} follows from adding and subtracting the entropy of $\mathbf{c}$,  $H(\mathbf{c})=- \sum_{i=1}^N  \pi_i \log(\pi_i)$. In \eqref{eq:pihat}, 
 $\widehat{\pi}_i (x) \coloneqq \frac{  \pi_i p_{g_i}(x)}{ \sum_{j=1}^N \pi_j p_{g_{j}}(x)}$ denotes the  posterior probability that sample $\mathbf{x}$ belongs to  conditional generative  distribution $p_{g_{i}}$ which was trained on samples from $p_{\mathbf{x}_i}$. For fixed $\boldsymbol{\pi}$, $H(\mathbf{c})$ is constant, and the regularization term in \eqref{eq:opt info} is equivalent to the negative of the cross-entropy between distributions $\boldsymbol{\pi}$ and $\widehat{\boldsymbol{\pi}}$. %, denoted by $\text{CE} (\mathbf{c}, \widehat{\mathbf{c}} ) $. 
 We estimate $\widehat{\pi}_i (x)$ using a multi-class classifier which uses the latent code $\mathbf{c}$ taken as input by $G(\mathbf{z},\mathbf{c})$ as class labels. We denote this classifier by $Q(\mathbf{x})$, and its predicted label by $\widehat{\mathbf{c}}$. The classifier is trained to correctly determine which conditional generative  distribution generated a given synthetic sample, i.e., to maximize the negative cross entropy term in \eqref{eq:pihat}, which in turn encourages the generator to synthesize data with indistinguishable conditional distributions. Therefore, \eqref{eq:opt info} becomes 
 \begin{align}
    &\min_{G}  \max_{D,Q}   V_{I}'(D,G,Q) \coloneqq\mathbb{E}_{(\mathbf{x},\mathbf{c})\sim p_{\mathbf{x},\mathbf{c}}} [\log(D(\mathbf{x}, \mathbf{c}))] \notag \\ 
    &\quad + \mathbb{E}_{\mathbf{z}\sim p_{\mathbf{z}}, \mathbf{c}\sim p_{\mathbf{c}}} [\log(1- D(G(\mathbf{z}, \mathbf{c}), \mathbf{c}))] \notag\\ 
     &\quad +\lambda  \sum\nolimits_{i=1}^N  \pi_i \mathbb{E}_{\mathbf{z}\sim p_{\mathbf{z}}}[ \log( Q(G(\mathbf{z}, i)) )]    \label{eq:opt final}
\end{align}

In our experiments, we arbitrarily partition the dataset into $N$ equal non-overlapping subsets such that  $\pi_i=\frac{1}{N}$, for which $H(\mathbf{c})=\log(N)$. For an i.i.d. dataset distributed according to $p_{\textbf{x}}(x)$, we expect that any partitioning of data will result in subsets with similar distributions. The discriminator, generator and classifier are neural networks parametrized by $\theta_d$, $\theta_g$ and $\theta_q$, respectively. Algorithm ~\ref{alg:algorithm}, presents the pseudo-code for learning the parameters by alliteratively training $G$, $D$ and $Q$ using stochastic gradient descent.  In order to prevent vanishing gradients early in learning, as proposed in \cite{goodfellow2014generative}, we train the generator to maximize $\log(D(G(\mathbf{z}, \mathbf{c}), \mathbf{c}))$ rather than minimizing $\log(1- D(G(\mathbf{z}, \mathbf{c}), \mathbf{c}))$. Additionally, as in adversarial training, the generator is trained to fool the classifier by maximizing the cross-entropy loss for randomly selected incorrect class labels. Following \cite{mukherjee2021privgan}, we improve the convergence by initializing  classifier $Q$ with weights obtained from pre-training it on the train set using the corresponding membership indices as class labels. Finally, we train the discriminator and generator for  $K$ epochs, without training the classifier to allow for the generative model to recover a rough estimate of the data distribution.

\begin{remark}
Even though privacy preservation via PIGAN comes  at an additional computational cost of training classifier $Q$ relative to a non-private GAN, it can be trained more efficiently compared to the PrivGAN architecture proposed in \cite{mukherjee2021privgan}, which requires training multiple generator-discriminator pairs %($\approx N\times$ trainable parameters)
to convergence, and hence, requires a larger training dataset.
\end{remark}

\begin{algorithm*}[!h]
\resizebox{0.85\textwidth}{!}{
\begin{minipage}{\textwidth}
\caption{Training PIGAN using stochastic gradient descent}
 \label{alg:algorithm}
\begin{algorithmic}[1]
\STATE Partition dataset $X$ into $N$ non-overlapping subsets, index all points in each subset by $c=1,\dots,N$. 
\FOR{number of training iterations}
\STATE Sample a minibatch of $m$ data points $\{(x^{(1)}, c^{(1)}),\dots,(x^{(m)}, c^{(m)})\}$
from the data distribution $p_{\mathbf{x}}(x)$.
\STATE Sample a minibatch of $m$ noise samples $\{z^{(1)},\dots,z^{(m)}\}$ from   $p_{\mathbf{z}}(z)$, and $m$ latent samples $\{c^{(1)},\dots,c^{(m)}\}$ from $p_{\mathbf{c}}$. 
\STATE Update the discriminator by ascending its stochastic gradient:
$$\nabla_{\theta_d} \frac{1}{m} \sum_{i=1}^m \Big[\log(D(x^{(i)}, c^{(i)})] + \log(1- D(G(z^{(i)}, c^{(i)}), c^{(i)}))\Big]$$
\STATE Update the classifier by ascending its stochastic gradient:
$\nabla_{\theta_q} \frac{1}{m} \sum_{i=1}^m \log(Q(G(z^{(i)}, c^{(i)})) $
\STATE Sample a minibatch of $m$ noise samples $\{z^{(1)},\dots,z^{(m)}\}$ from   $p_{\mathbf{z}}(z)$, and $m$ latent samples $\{c^{(1)},\dots,c^{(m)}\}$ from $p_{\mathbf{c}}$. 
\STATE Update the generator by descending its stochastic gradient:
\begin{align}
\nabla_{\theta_g} \frac{1}{m} \sum_{i=1}^m \Big[& \log(1- D(G(z^{(i)}, c^{(i)}), c^{(i)})) 
+ \lambda \log(Q(G(z^{(i)}, c^{(i)})) \Big]\notag
\end{align}
\ENDFOR
\end{algorithmic}\label{alg:pigan}
\end{minipage}
 }% <------------- end of \resizebox
\end{algorithm*}

\section{Evaluation Metrics}\label{sec:evaluation metrics}
% Privacy-preserving generative models are evaluated based on the level of privacy they provide and the quality of samples they generate. 
In this section, we briefly describe the metrics used in our experiments to compare different generative models in terms of the level of privacy they provide and the quality of samples they generate.
 
\subsection{Membership Attack Vulnerability}
We measure a model's privacy loss level, by empirically evaluating the success of adversarial attacks introduced in \cite{hayes2019logan,hilprecht2019monte,mukherjee2021privgan}. We assume that the adversary has a suspect dataset that contains samples from the training set and a holdout set, and that it knows the size of the training set $m$. We consider the following MIAs:
\begin{itemize} 
    \item \textbf{White-box (WB) attack:} The adversary has access to the trained discriminator model. As proposed in  \cite{hayes2019logan}, the attacker can use the discriminator outputs to rank  samples in its dataset, since the discriminator will assign higher confidence scores to train data samples if the model has overfit on the training set. The attacker predicts the $m$ samples with the highest scores as members of the training set. The WB attack was extended to the multiple discriminator setting of PrivGAN in \cite{mukherjee2021privgan}, by ranking the samples using the max (or mean) score among all discriminator scores assigned to a record. Similarly, the WB attack for PIGAN uses $\max\limits_{\mathbf{c}\in \{1,\dots,N\}} D(\mathbf{x},\mathbf{c})$ to rank record $\mathbf{x}$.
     \item \textbf{Total Variation Distance (TVD) attack:} It was shown in \cite{mukherjee2021privgan} that the total variation distance between the distribution of the discriminator scores on train and holdout sets, provides an upper limit on the accuracy of attacks that use discriminator scores. 
     In the multiple discriminator setting, the maximum TVD is used as the upper limit, and for PIGAN, we use $\max\limits_{\mathbf{c}\in \{1,\dots,N\}} \text{TVD}(P_{\mathbf{c}}, Q_{\mathbf{c}})$, where $P_{\mathbf{c}}$ and $Q_{\mathbf{c}}$ denote the distribution of $D(\mathbf{x}, \mathbf{c})$ for $\mathbf{x}$ in  train and holdout sets.%, respectively. 

    \item \textbf{Monte-Carlo (MC) attack:} The adversary has access to the generator or a set of generated samples. \cite{hilprecht2019monte} designs two attacks: (1) single membership inference (MC-Single), where the adversary aims to identify all records that are a member of the training set, and (2) set membership inference (MC-Set), where the adversary aims to determine which of two given sample sets are a subset of the  training set. We assume the adversary dataset contains $M$ samples from the train set and $M$ samples from the holdout set. The adversary uses a metric $f(\mathbf{x})$ to rank  samples $\mathbf{x}$ in its dataset.
    \begin{itemize}
        \item[$\circ$] \textbf{MC-Single:} The $M$ samples with the highest scores are predicted to be training set members, and attack accuracy is defined as the fraction of correctly labeled samples.
        
        \item[$\circ$] \textbf{MC-Set:} The set containing most of the top $M$ samples with the highest scores is identified as the set used to train the model. The attack accuracy is defined as the average success rate of correctly identifying the training subset.
     \end{itemize}  
    
    While \cite{hilprecht2019monte} proposes various $f(\mathbf{x})$ as the ranking metric, their experiments show that the following distance-based metric using the median heuristic outperforms others.

    \begin{align}
     f(x) = \frac{1}{n}\sum\limits_{i=1}^n \mathbbm{1} \{x_{g_i} \in U_{\epsilon}(x)\}, \, U_{\epsilon}(x) = \{x'|d(x,x')\leq \epsilon\}, \notag
    \end{align}

    where  $\{x_{g_1},\dots,x_{g_n}\}$ are generated samples, and $U_{\epsilon}(x)$ denotes the $\epsilon$-neighbourhood of $x$ with respect to some distance $d(.,.)$, and $\epsilon$ is computed as 
    $$\epsilon = \text{median}_{1\leq i\leq 2M} (\min\limits_{1\leq j\leq n} d(x_i, x_{g_j})).$$ 
    For image datasets, we use the Euclidean distance on few of the top components resulting from applying the Principal Components Analysis (PCA) transformation on pixel intensities of any two images \cite{hilprecht2019monte}.

\end{itemize}

\subsection{Sample Fidelity}
 Generative models should synthesize diverse high-fidelity samples that agree with human perceptual judgments. We use the following quantitative measures to compare different image synthesis models, and refer the reader to \cite{borji2019pros} for a comprehensive review on GAN evaluation measures.
\begin{itemize}
      \item \textbf{Inception Score (IS):} A pre-trained neural network (generally Inception Net \cite{szegedy2016rethinking}) is used to assess if samples are highly classifiable and diverse with respect to class labels $\mathbf{y}$ \cite{salimans2016improved}. This metric measures the KL divergence between the  conditional class distribution $p(\mathbf{y}|\mathbf{x})$ and marginal class distribution $p(\mathbf{y})$. For a high-quality generator, $p(\mathbf{y}|\mathbf{x})$ has low entropy (highly classifiable images), while $p(\mathbf{y})$ is high-entropy (diverse images), resulting in \textit{high} IS. Despite its wide adoption as a metric, IS does not take into account the statistics of real images, and is not able to capture how well the data distribution is approximated by the generator. %Other limitations of the inception score are discussed in \cite{barratt2018note}.
      
      \item \textbf{Fr\'echet Inception Distance (FID):} The FID score uses embeddings from the penultimate layer of Inception Net to measure the distance between real and generated image  distributions  \cite{heusel2017gans}. The embedding vector distributions are modeled as multivariate Gaussians, their mean and covariance are estimated, and their distance is measured using the Fr\'echet (Wasserstein-2) distance. A high-quality generator is able to approximate the real data distributions well resulting in \textit{low} FID. %FID performs well in terms of discriminability, robustness and computational efficiency.  
      
      \item {\textbf{Intra-FID Score}:} For class-conditional models, \cite{miyato2018cgans} introduced Intra-FID as a metric to assess the quality of a conditional generative distribution, by calculating the FID score separately for each conditioning and reporting the average score. %using the average score over all conditionings. 
      %It was empirically observed in \cite{miyato2018cgans} 
      They empirically demonstrated that Intra-FID is able to capture visual quality, intra-class diversity and conditional consistency.

      \item \textbf{Classification performance:} In order to evaluate how well the generative model captures the data distribution, it is proposed in \cite{ravuri2019classification} to use the synthetic samples for a downstream task such as training a  classifier to predict the real data class labels. %if the model captures the data distribution, performance on any downstream task should be similar whether using the original or model data
      The classification accuracy can be interpreted as a recall measure that relates to the diversity of generated samples  \cite{shmelkov2018good}.
      \end{itemize}

\section{Experiments}\label{sec:experiments}
We empirically investigate the effectiveness of the proposed regularization in preserving privacy on  two widely-adopted image datasets, and compare the privacy-fidelity trade-off achieved by PIGAN with respect to non-private and private baselines. We consider the following datasets: (1) \textbf{Fashion-MNIST} \cite{xiao2017fashion}  which contains $70,000$ labeled $28\times28$ grayscale images representing $10$  clothing categories, %The dataset is divided in to $60,000$ train and $10,000$ test samples.
and (2) \textbf{CIFAR-10} \cite{krizhevsky2009learning}, which contains $60,000$ labeled $32\times32$ color images representing $10$ categories such as planes, cars and ships.

\subsection{Setup and Model Architectures}
As conventional in  MIA literature on generative models, to trigger overfitting, we select a random $10\%$ subset from each dataset for training and use the remainder for evaluating the models. Results are reported by averaging attack accuracy and fidelity metrics corresponding to $10$ experiments run with different train-test splits. In addition to standard non-private GANs, we compare PIGAN with PrivGAN \cite{mukherjee2021privgan} and DPGAN \cite{DPGAN}. We do not consider PATE-GAN \cite{jordon2018pate} as it was shown to generate low quality samples for image datasets in \cite{chen2020gs}. 

\subsubsection{Model Architectures} Following existing work in this area \cite{hayes2019logan,hilprecht2019monte,mukherjee2021privgan, DPGAN}, we adopt the deep convolutional generative adversarial network (DCGAN) \cite{radford2015unsupervised} architecture for all  models trained on both datasets, and generate class labels by implementing the conditional variant of DCGAN \cite{mirza2014conditional}. All models are trained for $300$ epochs on Fashion-MNIST and $500$ epochs on CIFAR-10, with a mini-batch size of $128$ using the Adam optimizer with learning rate $0.0002$ and $\beta_1=0.5$.  For PIGAN, we use the same architecture as the non-private GAN modified to take the membership latent code $\mathbf{c}$ as an additional input to the first layer. The regularization classifier $Q$ is implemented similar to the discriminator, but without taking class and membership labels as inputs.  PrivGAN uses the same architecture as non-private GAN for all generator-discriminator pairs and its privacy discriminator is identical to all other discriminators differing only in the activation function of the final layer\footnote{\url{https://github.com/microsoft/privGAN}}. We implement DPGAN using the same architecture and train it in a  differentially private manner\footnote{Implemented using \url{https://github.com/tensorflow/privacy}.} with $\delta=10^{-4}$ (typically chosen as the inverse of train set size). All  model architectures are described in detail in Appendix~\ref{app:architecture}.

\subsubsection{Attack Parameters} WB and TVD attacks are tested on the  $90\%$ holdout samples %from each dataset 
not used for training. MC attacks are conducted using $10^5$ synthetic samples generated by each model, assuming that the adversary dataset contains $M=100$ samples from each of the train and holdout sets. For each experiment, MC attacks are repeated $20$ times and average accuracy is reported. As in \cite{hilprecht2019monte}, we use the Euclidean distance on the top $40$ %and $120$ 
PCA components. %for Fashion-MNIST and CIFAR-10. 
From each dataset's $90\%$ holdout samples, we select a random $10\%$ subset to compute the PCA transformation and use the rest for testing  MC attacks. %\textcolor{blue}{We do not apply PCA transformation on the tabular datasets.} 
Based on the train-holdout split of the attacker's  dataset, a baseline attack that uses random guessing will result in  $10\%$ WB attack accuracy, and $50\%$ MC attack accuracy. %\textcolor{red}{To DO for Parisa: add how many PCA used for tabular datasets}

\subsection{Evaluation}

\subsubsection{Privacy Preservation with PIGAN} 
We quantitatively assess how the proposed regularization in PIGAN prevents privacy loss with respect to its performance against various MIA attacks. Fig.~\ref{fig:performance} shows the attack vulnerability and generated sample quality of PIGAN trained with $N=2$, as regularization penalty $\lambda$ varies between $[0,30]$. For both datasets,  WB and MC-Set attack accuracy are reduced for stronger regularization values. With $\lambda=10$, PIGAN provides  $34.8\%$ and $13\%$ improvement for Fashion-MNIST against WB and MC-Set attacks compared to non-private GAN, respectively, which comes  only with $3.2\%$ degradation in downstream classification performance, and negligible increase in the Intra-FID score. For CIFAR-10, regularization with $\lambda=10$  reduces WB and MC-Set attack success by $21.8\%$ and $2.5\%$, respectively, with small loss in terms of Intra-FID and classification accuracy. Further quantitative evaluation of PIGAN in terms of TVD and MC-Single attacks, and inception and FID scores, as well as a qualitative comparison of the generated samples are provided in Appendix~\ref{app:lambda}.

\begin{figure*}[!htb]%[ht!]
\centering
\begin{subfigure}{.87\textwidth}
  \centering
  % include first image 
  \includegraphics[width=\textwidth]{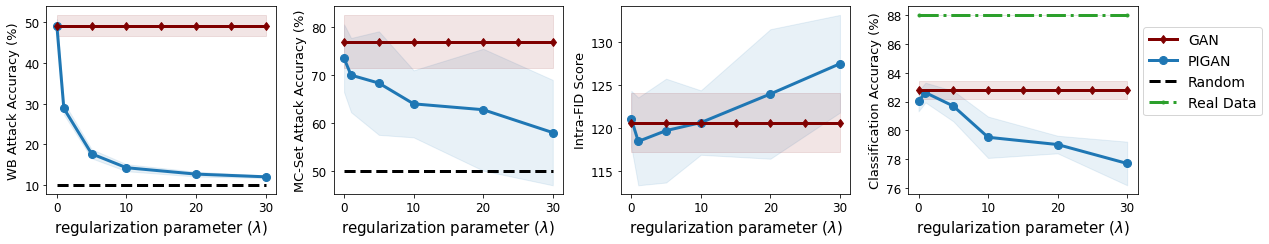} 
  \vspace{-6mm}
  \caption{Fashion-MNIST}
  \label{fig:sub-first}
\end{subfigure}
\\
\begin{subfigure}{.87\textwidth}
  \centering
  % include second image
  \includegraphics[width=1\textwidth]{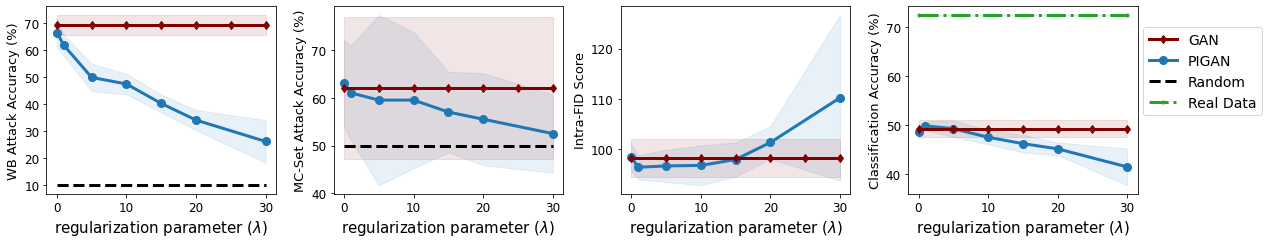}
    \vspace{-6mm}
  \caption{CIFAR 10}
  \label{fig:sub-second}
\end{subfigure}
\vspace{-2mm}
\caption{Privacy and fidelity measures for PIGAN trained with $N=2$ and  various regularization $\lambda$.}
\label{fig:performance}
\end{figure*}

 \setlength{\textfloatsep}{1pt}
 \begin{figure*}[!htb]%[ht!]
\centering
\begin{subfigure}{.9\textwidth}
  \centering
  % include first image
  \includegraphics[width=1\textwidth]{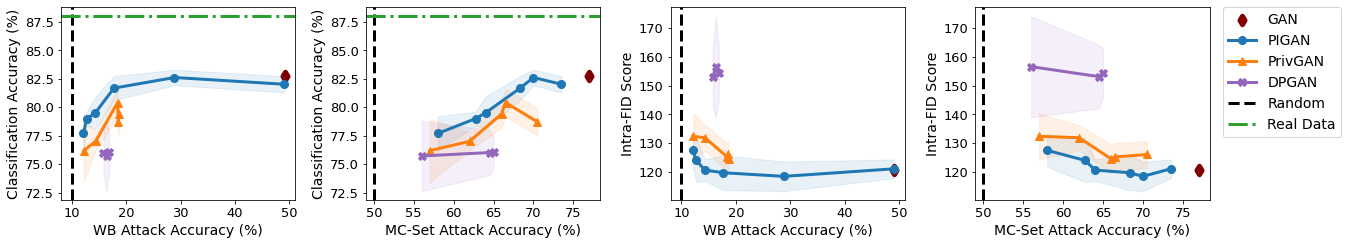} 
  \vspace{-6mm}
  \caption{Fashion-MNIST}
  \label{fig:sub-first}
\end{subfigure}
\\
\begin{subfigure}{.9\textwidth}
  \centering
  % include second image
  \includegraphics[width=1\textwidth]{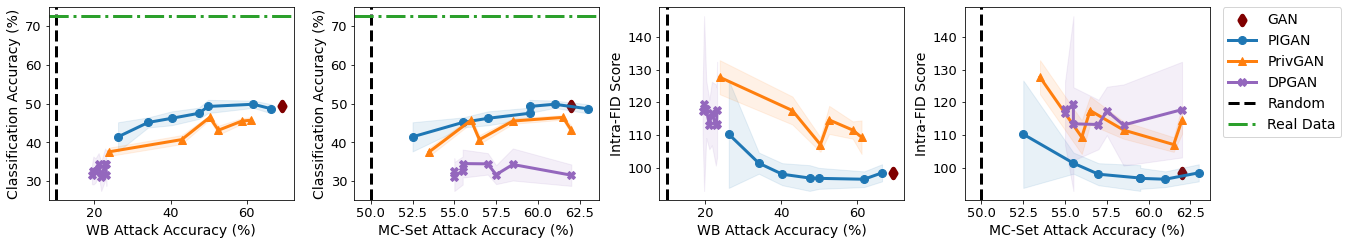}  
  \vspace{-6mm}
  \caption{CIFAR 10}
  \label{fig:sub-second}
\end{subfigure}
\vspace{-3mm}
\caption{Privacy-fidelity trade-off  achieved with different private models.}
\label{fig:tradeoff}
\vspace{-2mm}
\end{figure*}

\subsubsection{White box attacks using discriminator confidence scores}\label{app:wb attack}
As pointed out in \cite{mukherjee2021privgan}, GANs can be visually compared in terms of their resistance to WB attacks based on the distribution of confidence scores predicted by the discriminator on train and holdout samples. For a generative model with better privacy protection, the distributions will be more similar and the statistical differences between scores assigned to train and non-train (holdout) samples can not be exploited by an adversary.  As shown in Fig.~\ref{fig:dist lambda}, the distributions for non-private GAN are very different, while they they overlap more for PIGAN  as regularization parameter $\lambda$ is increased.

\begin{figure*}[!h]\centering
\begin{subfigure}{.22\textwidth}\centering
  \includegraphics[width=1\textwidth]{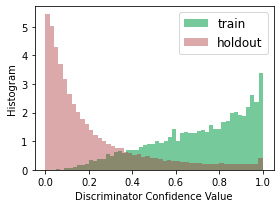} 
%   \caption{GAN}
\end{subfigure}
\begin{subfigure}{.22\textwidth}\centering
  \includegraphics[width=1\textwidth]{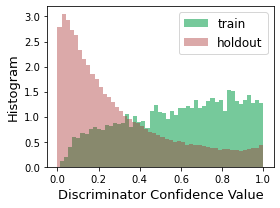}
%   \caption{PIGAN ($\lambda=1$)}
\end{subfigure}
\begin{subfigure}{.22\textwidth}\centering
  \includegraphics[width=1\textwidth]{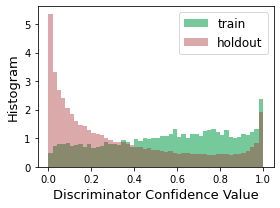} 
%  \caption{PIGAN ($\lambda=10$)}
\end{subfigure}
\begin{subfigure}{.22\textwidth}\centering
  \includegraphics[width=1\textwidth]{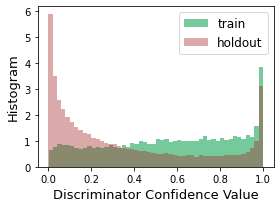}
%  \caption{PIGAN ($\lambda=20$)}
\end{subfigure}
\\
\begin{subfigure}{.22\textwidth}\centering
  \includegraphics[width=1\textwidth]{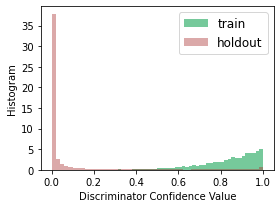}   \caption{GAN}
\end{subfigure}
\begin{subfigure}{.22\textwidth}\centering
  \includegraphics[width=1\textwidth]{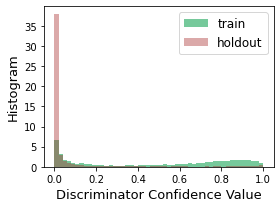}
  \caption{PIGAN ($\lambda=1$)}
\end{subfigure}
\begin{subfigure}{.22\textwidth}\centering
  \includegraphics[width=1\textwidth]{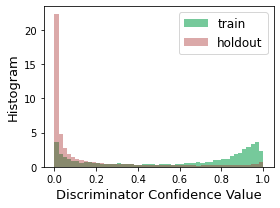} 
 \caption{PIGAN ($\lambda=10$)}
\end{subfigure}
\begin{subfigure}{.22\textwidth}\centering
  \includegraphics[width=1\textwidth]{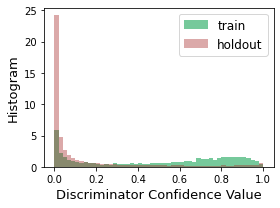}
 \caption{PIGAN ($\lambda=20$)}
\end{subfigure}
\vspace{-2mm}
\caption{Distribution of PIGAN's discriminator confidence score on train and holdout data compared to non-private GAN for Fashion-MNIST (top) and CIFAR-10 (bottom).}
\label{fig:dist lambda}
\end{figure*}

\begin{table*}[!htbp]
\centering
\ra{1.3}
\resizebox{0.8\textwidth}{!}{%
\begin{tabular}{@{}rrrccccc@{}}\toprule
& \multicolumn{3}{c}{Fashion-MNIST} & \phantom{abc}& \multicolumn{3}{c}{CIFAR-10}  \\
\cmidrule{2-4} \cmidrule{6-8}  
& PIGAN  & PrivGAN & DPGAN && PIGAN & PrivGAN & DPGAN \\ \midrule
WB attack ($\%$)  $\downarrow$ 
&\text{17.7$\pm$1.1\;\;} & \text{18.3$\pm$1.3\;\;}  & 16.9$\pm$0.9
&& 26.2$\pm$7.8\,& 23.8$\pm$6.8 \;& 23.0$\pm$5.4 \;\\
MC-Set attack ($\%$) $\downarrow$ 
&68.3$\pm$10.9 &\text{67.0$\pm$12.5} &\textbf{65.0$\pm$9.2}
&&52.5$\pm$8.2\;& 53.5$\pm$12.6& \textbf{55.5$\pm$9.12}\\
Inception Score $\uparrow$  
&\textbf{2.43$\pm$0.03}   & \text{2.40$\pm$0.03}   & \;\;2.19$\pm$0.08
&& \textbf{\,3.92$\pm$0.16} & 3.62$\pm$0.22& 3.49$\pm$0.15\\
FID Score $\downarrow$  
&\textbf{22.8$\pm$1.57}   & \text{33.9$\pm$1.3\;\;}   &46.1$\pm$7.4
&& \textbf{70.44$\pm$15.3} & 87.6$\pm$7.8\;\;& 75.4$\pm$9.2\;\;\\
Intra-FID Score $\downarrow$  
&\textbf{119.7$\pm$6.04}   & \text{125.2$\pm$4.8\;\;}   &154.4$\pm$8.6\;\;
&& \textbf{110.2$\pm$16.4}& 127.4$\pm$5.2\;\;\;\;& 113.4$\pm$11.2\;\;\\
 Classification Accuracy ($\%$) $\uparrow$
 &\textbf{81.7$\pm$1.1\;}   & \text{79.6$\pm$1.7\;\;}   &76.1$\pm$1.4
 && \textbf{41.4$\pm$3.7}& 37.6$\pm$1.1\;\;& 34.6$\pm$3.6\;\;\\
\bottomrule
\end{tabular}
}
\vspace{-2mm}
\caption{Comparison of privacy-preserving models trained to achieve similar WB attack accuracy levels.}
\label{tb:comparison}
% \end{table*}

\bigskip
\medskip

\ra{1.}
\resizebox{0.82\textwidth}{!}{%
\begin{tabular}{@{}rlcllccccc@{}}\toprule
& \multicolumn{4}{c}{Privacy Attacks} & \phantom{}& \multicolumn{4}{c}{Fidelity Scores}  \\
\cmidrule{2-5} \cmidrule{7-10}
& WB  $\downarrow$ & TVD $\downarrow$ & MC-Set  $\downarrow$ & MC-Single $\downarrow$ && Inception  $\uparrow$ & FID  $\downarrow$ & Intra-FID  $\downarrow$ & Classification $\uparrow$ \\ \midrule
$N=2$ 
&28.8$\pm$1.5 &0.448$\pm$0.015 &70.0$\pm$7.7 &51.8$\pm$0.8
&& 2.44$\pm$0.03 & \textbf{20.7$\pm$1.2} & \textbf{118.5$\pm$5.1} &  \textbf{82.6$\pm$0.6}\\
$N=3$ 
&\textbf{18.5$\pm$1.9} &\textbf{0.422$\pm$0.016}&\textbf{63.9$\pm$10.6} & \textbf{51.4$\pm$0.6}
&& 2.43$\pm$0.04& 24.2$\pm$1.8& 120.1$\pm$3.2& 81.5$\pm$0.1\\
$N=4$ 
&22.7$\pm$2.6  &0.461$\pm$0.024   &65.5$\pm$11.5&51.8$\pm$0.9
&& \textbf{2.45$\pm$0.06}& 24.4$\pm$1.3& 120.7$\pm$4.3& 81.9$\pm$0.5\\
$N=5$ 
 &21.3$\pm$2.7   &0.455$\pm$0.025 &64.5$\pm$9.3&51.6$\pm$0.6
 && 2.34$\pm$0.04& 22.9$\pm$1.7& 125.1$\pm$4.8& 82.2$\pm$0.7 \\
 $N=6$ 
 &22.3$\pm$2.8   &0.458$\pm$0.027 &69.5$\pm$3.5&51.9$\pm$1.1
 && \textbf{2.44$\pm$0.03}&25.7$\pm$2.1 & 119.4$\pm$4.6& 81.5$\pm$0.8\\
\bottomrule
\end{tabular}
}
 \vspace{-2mm}
\caption{Privacy and fidelity measures of PIGAN trained with $\lambda=1$ and different $N$ on Fashion-MNIST.}
\label{tb:number N}
\end{table*}

\subsubsection{Privacy-Fidelity Trade-off}
To simultaneously compare the privacy loss and generated sample quality achieved by PIGAN ($N=2$) with the baselines (GAN, PrivGAN ($N=2$) and DPGAN), we use the trade-off curves presented in Fig.~\ref{fig:tradeoff}. The curves are generated by training PIGAN and PrivGAN for a range of $\lambda$ and by training DPGAN for a range of $\epsilon$ obtained from different clipping and noise levels. 
%We compare the expected  trade-off between privacy loss and generated sample quality achieved by PIGAN ($N=2$) with that of PrivGAN ($N=2$)  and DPGAN. Fig.~\ref{fig:tradeoff} shows this trade-off for PIGAN and PrivGAN trained with a range of $\lambda$ and DPGAN with a range of $\epsilon$ obtained from different clipping and noise levels. 
For Fashion-MNIST and CIFAR-10, the two left curves in Figs.~\ref{fig:tradeoff} (a) and (b)  are strictly higher (other than one point for CIFAR-10) than the curves of all other private methods, and the two right curves  are strictly lower. % classification accuracy-WB attack accuracy and classification accuracy-MC-set attack accuracy curves for PIGAN are strictly higher than all other private method curves, and the Intra-FID score-WB attack accuracy and Intra-FID score-MC-Set attack accuracy curves  are strictly lower.
Therefore, for a specified level of privacy (i.e., a fixed point on the $x$-axis), a   classifier  trained on data that is generated by PIGAN will generalize better compared to one trained on data generated by other private models. Moreover, based on the lower Intra-FID scores shown in two right curves in Figs.~\ref{fig:tradeoff} (a) and (b), PIGAN is able to generate data with better sample quality and higher intra-class diversity. As expected for differentially private GANs, DPGAN is inferior to  PIGAN and PrivGAN for both datasets, and provides privacy at the cost of losing on downstream utility and generated sample quality. The models are compared %with respect to other evaluation metrics, as well as 
visually based on generated sample quality in Appendix~\ref{app:tradeoff}.

Table~\ref{tb:comparison} reports the different fidelity metrics for each model based on a fixed WB accuracy point on the curves in Fig.~\ref{fig:tradeoff}, i.e., when the models are trained (with proper parameters $\lambda$, $\epsilon$) to achieve (almost) equal WB attack accuracy. Specifically, considering WB attack accuracy close to $17\%$ for Fashion-MNIST, PIGAN ($\lambda=5$) outperforms PrivGAN ($\lambda=0.01$) and DPGAN by $2.1\%$ and $5.6\%$ in terms of downstream classification, and by $5.5$ and $34.7$ in terms of Intra-FID score, respectively. For CIFAR-10, when considering WB attack success of around $23$-$26\%$, PIGAN ($\lambda=30$) generates data with $3.8\%$ and $6.8\%$ higher classification utility compared to  PrivGAN ($\lambda=20$) and DPGAN, respectively.
%Specifically, as summarized in Table~\ref{tb:comparison}, for a generative model that has around $17\%$ vulnerability to WB attacks, PIGAN outperforms PrivGAN and DPGAN by $2.1\%$ and $5.6\%$ in terms of downstream classification, and by $5.5$ and $34.7$ in terms of Intra-FID score, respectively. For models trained on CIFAR-10 with WB attack success of around $23$-$26\%$,  PIGAN generates data with $3.8\%$ and $6.8\%$ higher classification utility compared to  PrivGAN and DPGAN, respectively. 

\subsubsection{Hyperparameter ${N}$} 
%We arbitrarily partition the data into N non-overlapping subsets. For an i.i.d. dataset distributed according to $p_{\textbf{x}}(x)$, we expect that any partitioning of data will result in subsets with similar distributions.
In theory, increasing $N$ should improve privacy since the classifier $Q$ would prevent the generative model from overfitting to specific subsets of the data. However, larger $N$ results in less training samples for each membership group, which impacts the learned generative distribution for each group and the resulting sample quality.  Privacy and fidelity measures for PIGAN trained on Fashion-MNIST for $\lambda=1$ are reported in Table~\ref{tb:number N} as $N$, the cardinality of latent code $\mathbf{c}$, is changed from $2$ to $6$. An equal train set size is used for training PIGAN with different $N$. Increasing $N$ from $2$ to $3$ reduces the success of WB and MC-Set attacks by $10.3\%$ and $6.1\%$, respectively. This improvement in privacy only degrades the Intra-FID score by $1.6$ points and the classification performance by $1.1\%$. Increasing $N$ beyond $3$ does not seem to improve the privacy-utility trade-off on Fashion-MNIST considerably.
 %From our experiments on Fashion-MNIST, reported in Table 1, we observe an optimal privacy level is achieved with $N=3$, which still results in relatively similar generated sample fidelity and increasing $N$ beyond 3 does not seem to improve the privacy-utility trade-off considerably. 
 Parameters $N$ and $\lambda$ are hyper-parameters that can be tuned for the downstream task, and will likely depend on the dataset as mentioned by (Mukherjee et al. 2021) based on experiments for PrivGAN. %\textcolor{red}{In our experiments for Fashion-MNIST and CIFAR-10, selecting $N=2$ and $\lambda \in [10,15]$ seem to be satisfactory operating points on the privacy-fidelity curves in Fig.~\ref{fig:tradeoff}.}

\subsubsection{Training Efficiency of PIGAN} \label{subsec:efficiency}
We empirically demonstrate that the proposed PIGAN model can be trained more efficiently  compared to PrivGAN, which requires training multiple generator-discriminator pairs.% with almost $N\times$ more trainable parameters for large values of $N$.   
The following table summarizes the number of trainable parameters in each model. Compared to the GAN architecture, PIGAN has $1.33\times$ more parameters when trained on Fashion-MNIST, while PrivGAN has $2.28\times$  more parameters. %For CIFAR-10, the comparison is $\times 1.34$ and $\times 2.27$, respectively.

\vspace{0.2cm}
\begin{table}[t]
  \centering
 \resizebox{0.7\columnwidth}{!}{
\begin{tabular}{@{}rccc@{}}\toprule
\phantom{a} & \multicolumn{3}{c}{$\#$parameters ($\times10^6$)}  \\ \cmidrule{2-4}
& GAN & PIGAN  & PrivGAN     \\ \midrule
Fashion-MNIST  
&2.24 & 2.98  & 5.11\\
CIFAR-10
&  1.93 &  2.59 & 4.39\\
\bottomrule
\end{tabular}
}
\end{table}
\vspace{0.2cm}

As opposed to PIGAN, the number of parameters in PrivGAN increases (almost linearly) with $N$, and each generator-discriminator pair in PrivGAN is trained with a smaller fraction of the data. The training procedure for PrivGAN is such that each  generator-discriminator pair is only trained on $1/N^{\text{th}}$ of the training data. Therefore, for equal batch size and number of epochs, PrivGAN parameters are updated less compared to PIGAN. We conduct the following two experiments to compare the models in terms of training efficiency, and  report the results in Table \ref{tb:cost-cases}. In both experiments, $N=2$ and $\lambda$ is chosen such that PIGAN  generates samples with relatively similar (or slightly stronger\footnote{Note that in cases that the WB attack accuracy of PIGAN is less than PrivGAN, PIGAN provides stronger privacy protection.}) privacy levels compared to PrivGAN in terms of WB attack accuracy. Note that for this choice of $\lambda$, PrivGAN has lower vulnerability to MC-Set attacks  compared to PIGAN. % 

\paragraph{\textbf{Experiment 1:}} We trained PIGAN for less number of epochs compared to PrivGAN. For Fashion-MNIST, PIGAN was trained for $200$ epochs (with $K=100$), while PrivGAN was trained for 300 epochs. For CIFAR-10 we used $400$ epochs (with $K=300$) for PIGAN compared to $500$ epochs for PrivGAN.  As observed from Table \ref{tb:cost-cases}(a), when the models are trained to achieve similar levels of privacy in terms of WB attacks, PIGAN outperforms PrivGAN in terms of generated sample fidelity  despite being trained for less epochs.
\paragraph{\textbf{Experiment 2:}} We trained PIGAN on $3/4^{\text{th}}$ of the train set on both Fashion-MNIST and CIFAR-10 datasets, while we trained PrivGAN on the entire train set. From Table~\ref{tb:cost-cases}(b), it can be observed that even with $25\%$ less training samples, PIGAN is able to generate images with higher (or similar) quality in terms of classification accuracy and Intra-FID score, while providing  WB privacy levels comparable to PrivGAN for both datasets. %However, PrivGAN has lower vulnerability to MC-Set attacks  compared to PIGAN.
%provide comparable privacy levels while generating data with the same quality as PrivGAN.

% \setlength{\textfloatsep}{5pt}
\begin{table*}[!h]
\centering

\begin{subtable}[t]{\linewidth}
  \centering
  
% \ra{1.3}
\resizebox{0.6\textwidth}{!}{%
\begin{tabular}{@{}rrcclr@{}}\toprule
& \multicolumn{2}{c}{Fashion-MNIST} & \phantom{}& \multicolumn{2}{c}{CIFAR-10}  \\
\cmidrule{2-3} \cmidrule{5-6}
& PIGAN & PrivGAN && PIGAN & PrivGAN \\ 
\midrule
 WB ($\%$) $\downarrow$ 
 & \textbf{13.9$\pm$0.7} &  \text{14.3$\pm$0.8} 
 && \textbf{30.9$\pm$3.9} & \text{42.9$\pm$6.8}\\
 MC-Set ($\%$) $\downarrow$
 & { 64.5$\pm$0.7} & \textbf{\;\, 62.0$\pm$11.7}
 && 59.5$\pm$10.6 & \textbf{56.5$\pm$8.4} \\
 Intra-FID $\downarrow$
 & \textbf{119.5$\pm$4.6} & 131.9$\pm$4.4
 && \textbf{96.9$\pm$3.3 } & 117.5$\pm$4.3 \\
 Classification Accuracy ($\%$) $\uparrow$
 & \textbf{80.2$\pm$0.8}  &\text{77.0$\pm$0.8}
 && \textbf{45.3$\pm$1.4} & 40.7$\pm$1.2\\
\bottomrule
\end{tabular}
}
\caption{Experiment 1}
\label{tb:exp1}
\end{subtable}

\vspace*{2mm}
\centering

\begin{subtable}[t]{\linewidth}
  \centering
  
% \ra{1.3}
\resizebox{0.6\textwidth}{!}{%
\begin{tabular}{@{}rrrcrr@{}}\toprule
& \multicolumn{2}{c}{Fashion-MNIST} & \phantom{}& \multicolumn{2}{c}{CIFAR-10}  \\
\cmidrule{2-3} \cmidrule{5-6}
& PIGAN & PrivGAN && PIGAN & PrivGAN \\ 
\midrule
 WB ($\%$) $\downarrow$ 
 & \textbf{12.0$\pm$0.3 } &  12.2$\pm$0.5 
 && \textbf{39.8$\pm$2.7 } & 42.9$\pm$6.8\\
 MC-Set ($\%$) $\downarrow$
 & \text{\;62.1$\pm$13.1} & \textbf{57.0$\pm$8.7}
 && \text{\;59.0$\pm$11.5} & \textbf{56.5$\pm$8.4} \\
 Intra-FID $\downarrow$
 & \textbf{122.2$\pm$7.0 } & 132.4$\pm$7.9
 && \textbf{100.1$\pm$3.1 } & 117.5$\pm$4.3 \\
 Classification Accuracy ($\%$) $\uparrow$
 & \textbf{77.1$\pm$0.9 }  &76.2$\pm$2.8
 && \textbf{43.0$\pm$1.7 } & 40.7$\pm$1.2\\
\bottomrule
\end{tabular}
}
\caption{Experiment 2}
\label{tb:exp2}
\end{subtable}
 
 \caption{PIGAN vs PrivGAN in terms of training efficiency.}
\label{tb:cost-cases}
\end{table*}

 \subsubsection{Privacy Range}
As observed from Fig.~\ref{fig:tradeoff}, PIGAN provides a wider privacy range against WB attacks compared to PrivGAN (as also noted in \cite{mukherjee2021privgan}) and DPGAN, especially for Fashion-MNIST. It is worth noting that PrivGAN requires training multiple G-D pairs with a more trainable parameters compared to PIGAN, and therefore, it requires more training data and updates compared to PIGAN, especially for large values of $N$. PrivGAN's smaller privacy range, may potentially be due to the additional computational cost, since all $N$ G-D pairs may not converge as fast as models with a single generator and discriminator (e.g., GAN and PIGAN) when trained for the same number of iterations. Consequently, it may be less susceptible to privacy attacks due to less fitting to the data. Based on the results reported in Fig.~\ref{fig:tradeoff}, without regularization ($\lambda= 0$), PrivGAN is $30.7\%$ and $31.4\%$ less susceptible to WB attacks compared to non-private GAN for Fashion-MNIST and CIFAR-10, respectively. However, as expected, with no privacy regularization using $\lambda=0$, PIGAN achieves almost the same WB attack accuracy as non-private GAN. %We empirically compare PIGAN and PrivGAN in terms of the training efficiency in Appendix~\ref{app:efficiency}.

\begin{figure}[h!]
  \centering
\begin{subfigure}{0.24\textwidth}
  \centering
  \includegraphics[width=\textwidth]{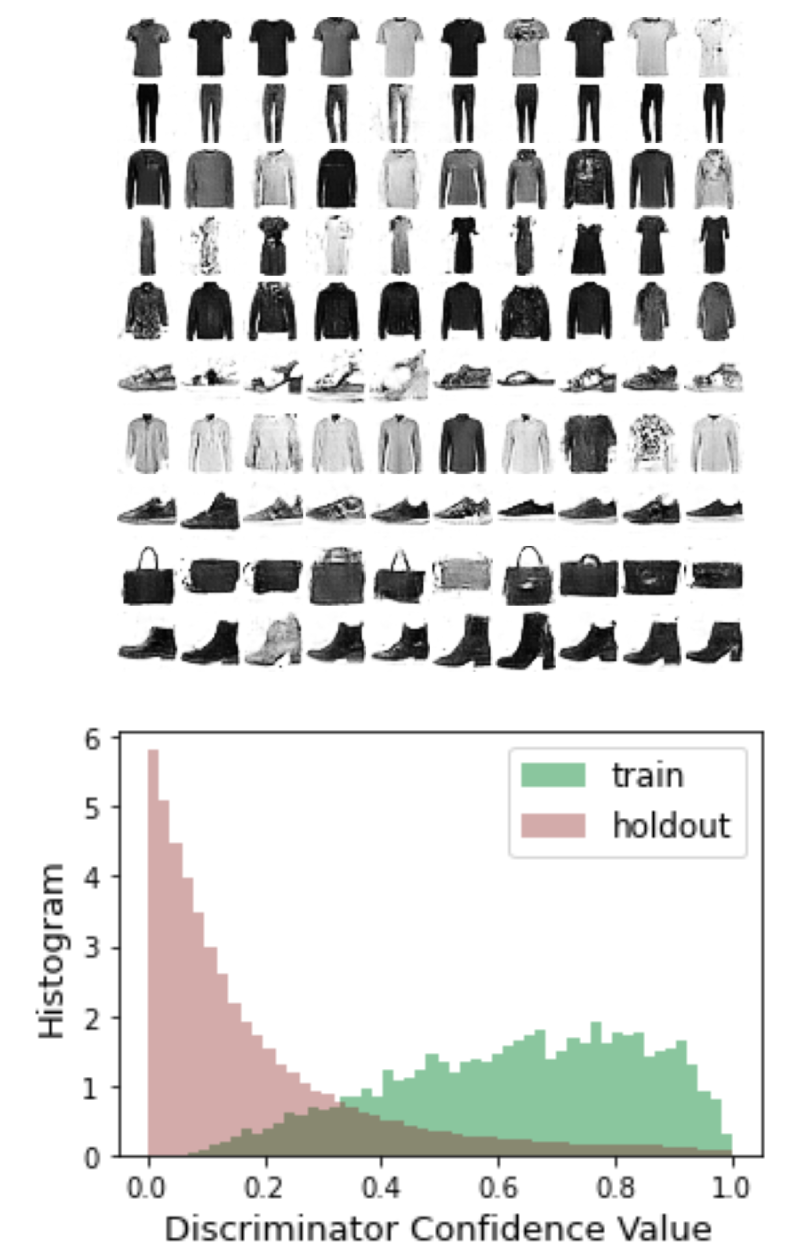}
%   \vspace{-6mm}
  \caption{$\mathbf{c}=1$}
\end{subfigure}
\begin{subfigure}{0.22\textwidth}
  \centering
  \includegraphics[width=\textwidth]{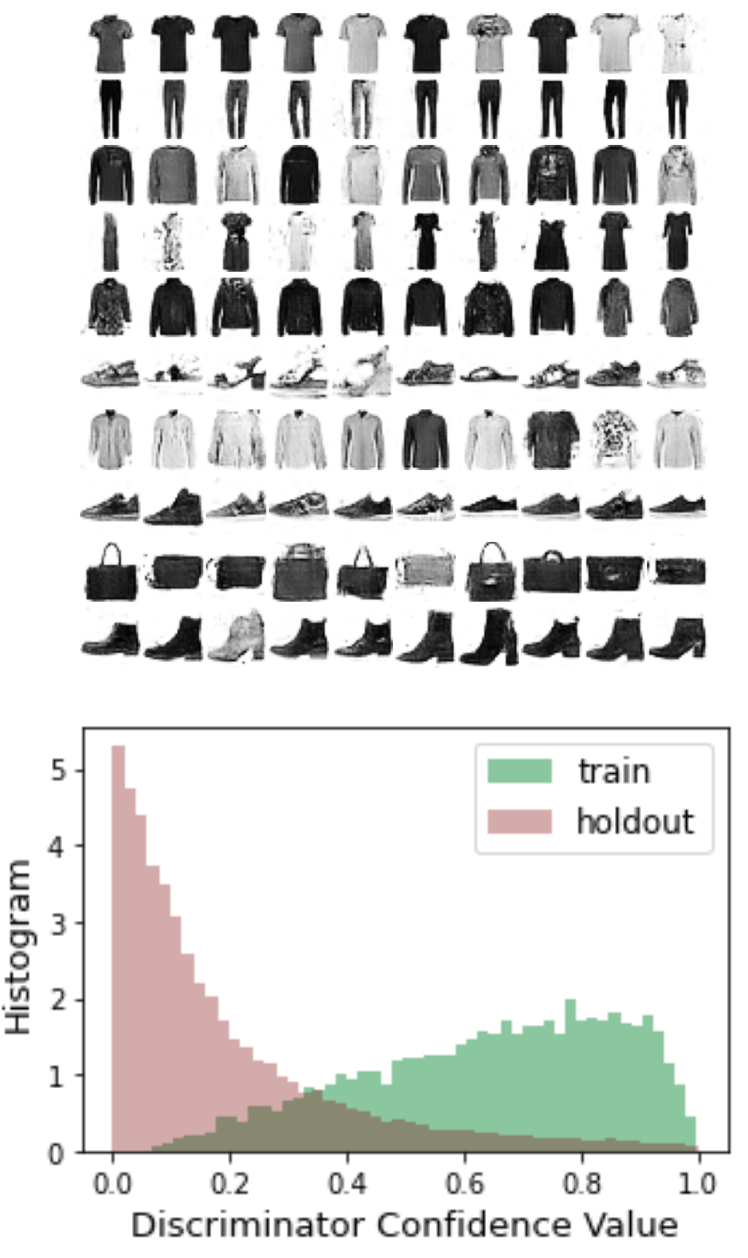}  
%   \vspace{-6mm}
  \caption{$\mathbf{c}=2$}
\end{subfigure}
\vspace{-3mm}
\caption{\small PIGAN trained on Fashion-MNIST with $N=2$, $\lambda=0$.}
\label{fig:lambda0}
\end{figure}

\subsubsection{No Penalty $\boldsymbol{\lambda=0}$} 
To further demonstrate the vulnerability of GANs to privacy attacks, we train PIGAN on Fashion-MNIST with $N=2$ and $\lambda=0$, such that the generator is conditionally trained on different subsets of the data but is not penalized for memorizing the data in each subset. As shown in Fig.~\ref{fig:lambda0}, synthetic samples generated by PIGAN are visually indistinguishable for $\mathbf{c}=1$ and $\mathbf{c}=2$, i.e., when a GAN is trained on different subsets of the dataset. However, there is a clear distinction between the discriminator's confidence about the realness of a sample from the train or holdout set for both values of $\mathbf{c}$. Such statistical differences arise from overfitting to the training set and can be exploited by an adversary. By imposing regularization with $\lambda=1$, the WB and MC-Set attack accuracy diminish from $49.1\%$ to $28.8\%$, and from $73.5\%$ to $70\%$, respectively (as reported in Table~\ref{tb:number N}). PIGAN's discriminator confidence score distributions   are compared for various $\lambda$ in  Appendix~\ref{app:wb attack}, which shows that the distributions become more similar as $\lambda$ is increased, meaning the discriminator assigns closer confidence values to real samples coming from train and holdout sets.

\section{Conclusions}
We propose a membership privacy-preserving training framework for deep generative models that mitigates overfitting to training data and leakage of sensitive information. Our proposed model, referred to as PIGAN, learns to estimate almost identical generative distributions for data with different membership, using an information-theoretic regularization term that aims to minimize the divergence between the distributions. We show that this can be implemented using a multi-class classifier at relatively low cost compared to private GANs that train multiple generators and discriminators. Our experiments demonstrate the resilience of PIGAN against several well-known MIA attacks without considerable degradation in generated data  fidelity, and show that PIGAN outperforms alternative private GANs   in terms of various utility measures   achieving improved privacy-fidelity trade-offs. As part of our future work we plan to implement PIGAN on tabular datasets with sensitive attributes such as health-related data. Other future work includes investigating achievable privacy levels when the information-theoretic regularization is imposed on discriminator confidence values rather than generated images, and comparing the mitigation of the two methods against various privacy attack types.

% \clearpage

\paragraph{Disclaimer}
This paper was prepared for informational purposes by
the Artificial Intelligence Research group of JPMorgan Chase \& Co. and its affiliates (``JP Morgan''),
and is not a product of the Research Department of JP Morgan.
JP Morgan makes no representation and warranty whatsoever and disclaims all liability,
for the completeness, accuracy or reliability of the information contained herein.
This document is not intended as investment research or investment advice, or a recommendation,
offer or solicitation for the purchase or sale of any security, financial instrument, financial product or service,
or to be used in any way for evaluating the merits of participating in any transaction,
and shall not constitute a solicitation under any jurisdiction or to any person, if such solicitation under such jurisdiction or to such person would be unlawful.

% \clearpage
%% The file named.bst is a bibliography style file for BibTeX 0.99c
\bibliographystyle{icml2022}
\bibliography{references}

\newpage

\appendix
\onecolumn
\section*{\centering\huge{Appendix}}
\vspace{0.2cm}
 
% \usepackage{times}
% \usepackage{soul}
% \usepackage{url}
% \usepackage[hidelinks]{hyperref}
% \usepackage[utf8]{inputenc}
% \usepackage[small]{caption}
% \usepackage{graphicx}
% \usepackage{amsmath}
% \usepackage{amsthm}
% \usepackage{booktabs}
% \usepackage{algorithm}
% \usepackage{algorithmic}
% \urlstyle{same}

% \usepackage{subcaption}
% \usepackage[export]{adjustbox}
% \usepackage{array}
% \usepackage{float}

\title{Appendix}

\section{Experimental Settings}\label{app:experiments}
%The code used for conducting experiments is available at \url{https://github.com/parisa-h/private-GAN}. 
All experiments are implemented in Keras and run with a single NVIDIA T4 GPU on an Amazon Web Services g4dn.4xlarge instance.

\subsection{Model Architectures}\label{app:architecture}
The proposed privacy preservation framework is independent of the architecture and should generalize to alternative models, particularly more complex models that generate higher fidelity
samples since they would be more susceptible to MIAs \cite{hayes2019logan}.
However, since the focus of our work is privacy-preserving
techniques, we follow existing work in this area \cite{hayes2019logan,hilprecht2019monte,mukherjee2021privgan, DPGAN} and adopt the deep convolutional generative adversarial network (DCGAN) \cite{radford2015unsupervised} as our base architecture and we compare various privacy-preserving techniques for the same architecture. Models are implemented in their conditional format to provide control over the generated class labels. All models are trained for $300$ epochs on Fashion-MNIST and $500$ epochs on CIFAR-10, and we use the Adam  optimizer \cite{kingma2014adam} with learning rate $\alpha=0.0002$ and momentum $\beta_1=0.5$, and a batch size of $128$.   The generator and discriminator architectures for non-private GAN are presented in Table~\ref{tb:gan models}. We use BN to denote batch normalization with momentum $0.9$, and LReLU to denote Leaky Rectified Unit with slope $\alpha=0.2$. The noise vector $\mathbf{z}$ is generated from a normal distribution with dimension $n_z=100$ for both Fashion-MNIST and CIFAR-10.  Table~\ref{tb:pigan models} provides the architectures used for PIGAN, which have been modified compared to their non-private counterparts to take the latent code $\mathbf{c}$ as an additional input, which is concatenated with other inputs in the early layers. Classifier $Q(\mathbf{x})$ is implemented using a very similar architecture as the non-private discriminator with the difference that it does not take the class labels $\mathbf{y}$ as input and that a Softmax activation is used in the final layer instead of a Sigmoid. For PrivGAN\footnote{\url{https://github.com/microsoft/privGAN}} \cite{mukherjee2021privgan}, all generator-discriminator pairs use the same architecture as non-private GAN, and the privacy discriminator is implemented identical to the classifier of PIGAN. %other discriminators with the difference that a Softmax activation is used in the final layer instead of a Sigmoid
DPGAN is implemented with the same architecture as non-private GAN and trained with differential privacy\footnote{Implemented using \url{https://github.com/tensorflow/privacy}} for $\delta=1e^{-4}$. For CIFAR-10, we use one-sided label smoothing for the discriminators by using a target label of $0.9$ rather than $1$.

\subsection{Other Design Choices and Hyperparameters}\label{app:hyperparam}
For both PIGAN and PrivGAN, we initialize classifier $Q$ and the privacy discriminator with weights obtained from pre-training them for $50$ epochs on the training data using membership indices as class labels. Additionally, the discriminators and generators of PrivGAN are trained for $K=100$ epochs without training the privacy discriminator \cite{mukherjee2021privgan}, while for PIGAN, they are trained for $K=200$ epochs on Fashion-MNIST and $K=400$ epochs on CIFAR-10, without updating the classifier.  

When evaluating the fidelity of data generated by the models, we use Tensorflow implementations of the Inception score and Fr\'echet Inception Distance, provided in {\url{https://github.com/tsc2017/Inception-Score}} and {\url{https://github.com/tsc2017/Frechet-Inception-Distance}}. We use $10^4$ generated samples for evaluations on Fashion-MNIST  and $2\times10^4$ generated samples for CIFAR-10. The classifier architectures used for evaluating the models based on downstream task performance is summarized in Table~\ref{tb:downstream}. MaxPool denotes a maxpool layer with pooling size of $2\times2$. The classifiers are trained with the Adam  optimizer (learning rate $\alpha=0.0002$ and momentum $\beta_1=0.5$), for $50$ epochs and a batch size of $64$ for both Fashion-MNIST and CIFAR-10. %$128$ and $64$ for Fashion-MNIST and CIFAR-10, respectively.  

%---------------------------------------------------------------------
%.     Model Architectures for GAN
%---------------------------------------------------------------------
\begin{table*}[!h]
\centering
\ra{1.3}
\resizebox{0.85\textwidth}{!}{%
\begin{subtable}{1\linewidth}\centering
{\begin{tabular}{@{}rccc@{}}\toprule
& \multicolumn{1}{c}{Generator $G(\mathbf{z},\mathbf{y})$} & \phantom{abc}& \multicolumn{1}{c}{Discriminator $D(\mathbf{x},\mathbf{y})$}   \\ \midrule
Inputs &  $\mathbf{z}\sim \mathcal{N}(0,\mathbf{I})$ and $\mathbf{y}\in \{1,\dots,10\}$ 
&& $\mathbf{x}\sim \mathbb{R}^{28\times28\times 1}$ and $\mathbf{y}\in \{1,\dots,10\}$   
\vspace{0.15cm}
\\
Layers  
& Concatenate $\mathbf{z}$ and $\mathbf{y}$
&&  Dense $28\times28$, Reshape ($\mathbf{y}$ $\rightarrow$ $\mathbf{\hat{y}}$)\\
& Dense $7\times7\times128$, BN, LReLU, Reshape
&& Concatenate $\mathbf{x}$ and $\mathbf{\hat{y}}$ \\
&   $5\times5$ stride=$2$ Deconv $128$, BN, LReLU
&& $5\times5$ stride=$2$ Conv $64$, LReLU\\
& $5\times5$ stride=$2$ Deconv $128$, BN, LReLU
&& $5\times5$ stride=$2$ Conv 128, LReLU\\
& $3\times3$ stride=$1$ Deconv $64$, BN, LReLU
&& $5\times5$ stride=$2$ Conv 128, LReLU, Flatten\\
& $3\times3$ stride=$1$ Conv $1$, Tanh
&& Dense $1$, Sigmoid\\
\bottomrule
\end{tabular}}
\caption{Fashion-MNIST}
\label{tb:fmnist gan}
\end{subtable}%
}

\bigskip

\resizebox{0.85\textwidth}{!}{%
\begin{subtable}{1\linewidth}\centering
{\begin{tabular}{@{}rccc@{}}\toprule
& \multicolumn{1}{c}{Generator $G(\mathbf{z},\mathbf{y})$} & \phantom{abc}& \multicolumn{1}{c}{Discriminator $D(\mathbf{x},\mathbf{y})$}   \\ \midrule
Inputs &  $\mathbf{z}\sim \mathcal{N}(0,\mathbf{I})$ and $\mathbf{y}\in \{1,\dots,10\}$ 
&& $\mathbf{x}\sim \mathbb{R}^{32\times32\times3}$ and $\mathbf{y}\in \{1,\dots,10\}$   
\vspace{0.15cm}
\\
Layers  
& Concatenate $\mathbf{z}$ and $\mathbf{y}$
&&  $3\times3$ stride=$1$ Conv $64$, LReLU ($\mathbf{x}$ $\rightarrow$ $\mathbf{\hat{x}}$)\\
& Dense $4\times4\times256$, LReLU, Reshape
&& Dense $32\times32\times3$, Reshape ($\mathbf{y}$ $\rightarrow$ $\mathbf{\hat{y}}$)\\
&   $4\times4$ stride=$2$ Deconv $128$, LReLU
&& Concatenate $\mathbf{\hat{x}}$ and $\mathbf{\hat{y}}$ \\
& $4\times4$ stride=$2$ Deconv $128$, LReLU
&& $3\times3$ stride=$2$ Conv $128$, LReLU\\
& $4\times4$ stride=$2$ Deconv $64$, LReLU
&& $3\times3$ stride=$2$ Conv $128$, LReLU\\
& $3\times3$ stride=$1$ Conv $3$, Tanh
&& $3\times3$ stride=$2$ Conv $256$, LReLU, Flatten\\
&
&& Dense $1$, Sigmoid\\
\bottomrule
\end{tabular}}
\caption{CIFAR-10}\label{tab:cifar10 gan}
\end{subtable}
}
\caption{Non-private GAN generator and discriminator architectures.}
\label{tb:gan models}
\end{table*}

%---------------------------------------------------------------------
%.     Model Architectures for Private GAN
%---------------------------------------------------------------------
\begin{table*}[!h]
\centering
\hspace{1cm}\resizebox{0.8\textwidth}{!}{%
\begin{subtable}{1\textwidth}\centering
\makebox[\textwidth]{
\begin{tabular}{@{}rccccc@{}}\toprule
& \multicolumn{1}{c}{Generator $G(\mathbf{z},\mathbf{c},\mathbf{y})$} & \phantom{}& \multicolumn{1}{c}{Discriminator $D(\mathbf{x},\mathbf{c},\mathbf{y})$} & \phantom{}& \multicolumn{1}{c}{Classifier $Q(\mathbf{x})$}   \\ 
\midrule
Inputs 
&  $\mathbf{z}\sim \mathcal{N}(0,\mathbf{I})$,  $\mathbf{y}\in \{1,\dots,10\}$ 
&& $\mathbf{x}\sim \mathbb{R}^{28\times28\times 1}$, $\mathbf{y}\in \{1,\dots,10\}$  
&& $\mathbf{x}\sim \mathbb{R}^{28\times28\times 1}$  
\\
& and $\mathbf{c}\sim \text{Uniform}\{1,N\}$ 
&& and $\mathbf{c}\sim \text{Uniform}\{1,N\}$ 
&&
\vspace{0.15cm}
\\
Layers  
& Concatenate $\mathbf{z}$ and $\mathbf{y}$ ($\mathbf{z}$, $\mathbf{y}$ $\rightarrow$ $\mathbf{z}\mathbf{y}$)
&&  Dense $28\times28\times1$, Reshape ($\mathbf{y}$ $\rightarrow$ $\mathbf{\hat{y}}$)
&&  $5\times5$ stride=$2$ Conv $64$, LReLU  \\
& Dense $7\times7\times128$ ($\mathbf{z}\mathbf{y}$ $\rightarrow$ $\widehat{\mathbf{z}\mathbf{y}}$)
&& Dense $28\times28\times1$, Reshape ($\mathbf{c}$ $\rightarrow$ $\mathbf{\hat{c}}$)  
&& $5\times5$ stride=$2$ Conv 128, LReLU \\
& Dense $7\times7\times32$ ($\mathbf{c}$ $\rightarrow$ $\mathbf{\hat{c}}$)
&& Concatenate $\mathbf{x}$, $\mathbf{\hat{c}}$ and $\mathbf{\hat{y}}$ 
&& $5\times5$ stride=$2$ Conv 128, LReLU, Flatten \\
& Concatenate $\widehat{\mathbf{z}\mathbf{y}}$ and $\mathbf{\hat{c}}$, BN, LReLU, Reshape
&& $5\times5$ stride=$2$ Conv $64$, LReLU
&& Dense $N$, Softmax \\
&  $5\times5$ stride=$2$ Deconv $128$, BN, LReLU
&& $5\times5$ stride=$2$ Conv 128, LReLU
&& \\
& $5\times5$ stride=$2$ Deconv $128$, BN, LReLU
&& $5\times5$ stride=$2$ Conv 128, LReLU, Flatten
&& \\
& $3\times3$ stride=$1$ Deconv $64$, BN, LReLU
&&  Dense $1$, Sigmoid
&& \\
& $3\times3$ stride=$1$ Conv $1$, Tanh
&&
&& \\
\bottomrule
\end{tabular}}
\caption{\large Fashion-MNIST}
\label{tb:fmnist gan}
\end{subtable}%
}

\bigskip

\hspace{1cm}\resizebox{0.8\textwidth}{!}{%
\begin{subtable}{1\textwidth}\centering
\makebox[\textwidth]{
\centering
\begin{tabular}{@{}rccccc@{}}\toprule
& \multicolumn{1}{c}{Generator $G(\mathbf{z},\mathbf{c},\mathbf{y})$} & \phantom{}& \multicolumn{1}{c}{Discriminator $D(\mathbf{x},\mathbf{c},\mathbf{y})$}   & \phantom{}& \multicolumn{1}{c}{Classifier $Q(\mathbf{x})$}   \\  \midrule
Inputs 
&  $\mathbf{z}\sim \mathcal{N}(0,\mathbf{I})$,  $\mathbf{y}\in \{1,\dots,10\}$ 
&& $\mathbf{x}\sim \mathbb{R}^{32\times32\times 3}$, $\mathbf{y}\in \{1,\dots,10\}$  
&& $\mathbf{x}\sim \mathbb{R}^{32\times32\times 3}$  
\\
& and $\mathbf{c}\sim \text{Uniform}\{1,N\}$ 
&& and $\mathbf{c}\sim \text{Uniform}\{1,N\}$ 
&&
\vspace{0.15cm}
\\
Layers  
& Concatenate $\mathbf{z}$ and $\mathbf{y}$ ($\mathbf{z}$, $\mathbf{y}$ $\rightarrow$ $\mathbf{z}\mathbf{y}$)
&&  $3\times3$ stride=$1$ Conv $64$, LReLU ($\mathbf{x}$ $\rightarrow$ $\mathbf{\hat{x}}$)
&& $3\times3$ stride=$1$ Conv $64$, LReLU \\
& Dense $4\times4\times256$, Reshape ($\mathbf{z}\mathbf{y}$ $\rightarrow$ $\widehat{\mathbf{z}\mathbf{y}}$)
&& Dense $32\times32\times3$, Reshape ($\mathbf{y}$ $\rightarrow$ $\mathbf{\hat{y}}$)
&& $3\times3$ stride=$2$ Conv $128$, LReLU\\
& Dense $4\times4\times64$, Reshape ($\mathbf{c}$ $\rightarrow$ $\mathbf{\hat{c}}$)
&&  Dense $32\times32\times1$, Reshape ($\mathbf{c}$ $\rightarrow$ $\mathbf{\hat{c}}$)
&& $3\times3$ stride=$2$ Conv $128$, LReLU\\
& Concatenate $\widehat{\mathbf{z}\mathbf{y}}$ and $\mathbf{\hat{c}}$, LReLU
&& Concatenate $\mathbf{x}$, $\mathbf{\hat{c}}$ and $\mathbf{\hat{y}}$  
&& $3\times3$ stride=$2$ Conv $256$, LReLU, Flatten\\
&   $4\times4$ stride=$2$ Deconv $128$, LReLU
&&  $3\times3$ stride=$2$ Conv $128$, LReLU
&&  Dense $N$, Softmax \\
& $4\times4$ stride=$2$ Deconv $128$, LReLU
&&  $3\times3$ stride=$2$ Conv $128$, LReLU
&&\\
& $4\times4$ stride=$2$ Deconv $64$, LReLU
&& $3\times3$ stride=$2$ Conv $256$, LReLU, Flatten
&&\\
& $3\times3$ stride=$1$ Conv $3$, Tanh
&& Dense $1$, Sigmoid
&&\\
\bottomrule
\end{tabular}}
\caption{\large CIFAR-10}\label{tab:cifar10 gan}
\end{subtable}
}
\caption{PIGAN generator, discriminator and classifier architectures.}
\label{tb:pigan models}
\end{table*}

%---------------------------------------------------------------------
%.     Model Architectures for classifier
%---------------------------------------------------------------------
\begin{table*}[!h]
\centering
\ra{1.2}
\resizebox{0.9\textwidth}{!}{%
\begin{tabular}{@{}rccc@{}}\toprule
& \multicolumn{1}{c}{Fashion-MNIST} & \phantom{abc}& \multicolumn{1}{c}{CIFAR-10}   \\ \midrule
Inputs &  $\mathbf{x}\sim \mathbb{R}^{28\times28\times 1}$ and $\mathbf{y}\in \{1,\dots,10\}$ 
&& $\mathbf{x}\sim \mathbb{R}^{32\times32\times 3}$ and $\mathbf{y}\in\{1,\dots,10\}$  
\vspace{0.15cm}
\\
Layers  
& $3\times3$ stride=$1$ Conv $32$, ReLU
&&  $3\times3$ stride=$1$ Conv $32$, ReLU \\
& $3\times3$ stride=$1$ Conv $64$, ReLU, MaxPool, Dropout($0.5$)
&&  $3\times3$ stride=$1$ Conv $32$, ReLU, MaxPool, Dropout($0.2$)  \\
& $3\times3$ stride=$1$ Conv $128$, ReLU, MaxPool, Dropout($0.5$), Flatten 
&& $3\times3$ stride=$1$ Conv $64$, ReLU\\
& Dense $128$, ReLU, Dropout($0.5$) 
&& $3\times3$ stride=$1$ Conv $64$, ReLU, MaxPool, Dropout($0.2$) \\
& Dense $10$, Softmax
&&$3\times3$ stride=$1$ Conv $128$, ReLU\\
&  
&& $3\times3$ stride=$1$ Conv $128$, ReLU, MaxPool, Dropout($0.3$), Flatten \\
&  
&& Dense $256$, ReLU, Dropout($0.3$) \\
& 
&& Dense $10$, Softmax\\
\bottomrule
\end{tabular}}
\caption{Classifier architectures used to evaluate generative models in downstream tasks.}
\label{tb:downstream}
\end{table*}

\section{Additional Results}\label{app:additional}

\subsection{Further evaluation of PIGAN trained with different $\boldsymbol{\lambda}$}\label{app:lambda}
Fig.~\ref{fig:performance-rest} provides additional quantitative evaluation measures for PIGAN that were not presented in the main paper as the regularization parameter $\lambda$ increases. As expected, the TVD attack score, which is an upper limit on WB attack accuracy using the discriminator, is lower for larger values of $\lambda$ for both datasets. Similarly, PIGAN becomes less vulnerable to MC-Single attacks as $\lambda$ increases. Our observations from Fig.~\ref{fig:performance-rest} regarding MC-Single attacks  are aligned with the experiments in \cite{hilprecht2019monte} (also noted in \cite{mukherjee2021privgan}) that MC-Single attacks are much less successful compared to MC-Set attacks and achieve accuracy close to random guessing. As $\lambda$ is increased, the FID score also increases while the inception score decreases for Fashion-MNIST, indicating the degradation in generated sample quality. When moving from $\lambda=0$ to $\lambda=1$, an initial dip in FID and rise in  the Inception score is observed, which is in agreement with the small increase in classification accuracy observed in Fig.~\ref{fig:performance}, confirming an improvement in generated data fidelity. A possible explanation for this improvement for very small $\lambda$ could be that the PIGAN regularization used to prevent overfitting and improve privacy is also improving the training process. The curves for CIFAR-10 exhibit similar expected trends. %While the curves for CIFAR-10 show similar expected trends, the variation in different scores is less than Fashion-MNIST, which could be due to the poor generated sample quality on CIFAR-10. 
In Fig.~\ref{fig:samples lambda} we visually compare the  samples generated with PIGAN and observe that while for small $\lambda$ the images are comparable to non-private GAN samples, the quality gets progressively worse with larger values of $\lambda$.

\begin{figure*}[!htb]
\centering
\begin{subfigure}{.9\textwidth}
  \centering
  \includegraphics[width=1\textwidth]{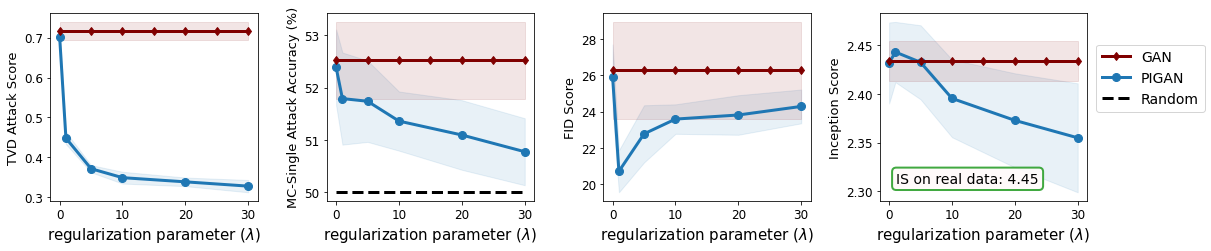} 
  \vspace{-6mm}
  \caption{Fashion-MNIST}
  \label{fig:sub-first}
\end{subfigure}
\\
\begin{subfigure}{.9\textwidth}
  \centering
  % include second image
  \includegraphics[width=1\textwidth]{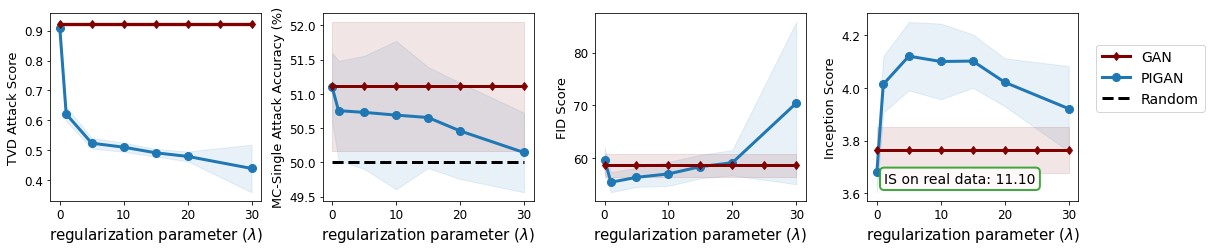}
    \vspace{-6mm}
  \caption{CIFAR 10}
  \label{fig:sub-second}
\end{subfigure}
\vspace{-2mm}
\caption{Additional privacy and fidelity measures for PIGAN trained with $N = 2$ and various regularization $\lambda$.}
\label{fig:performance-rest}
\end{figure*}

\begin{figure*}[!htb]\centering
\begin{subfigure}{.23\textwidth}\centering
  \includegraphics[width=1\textwidth]{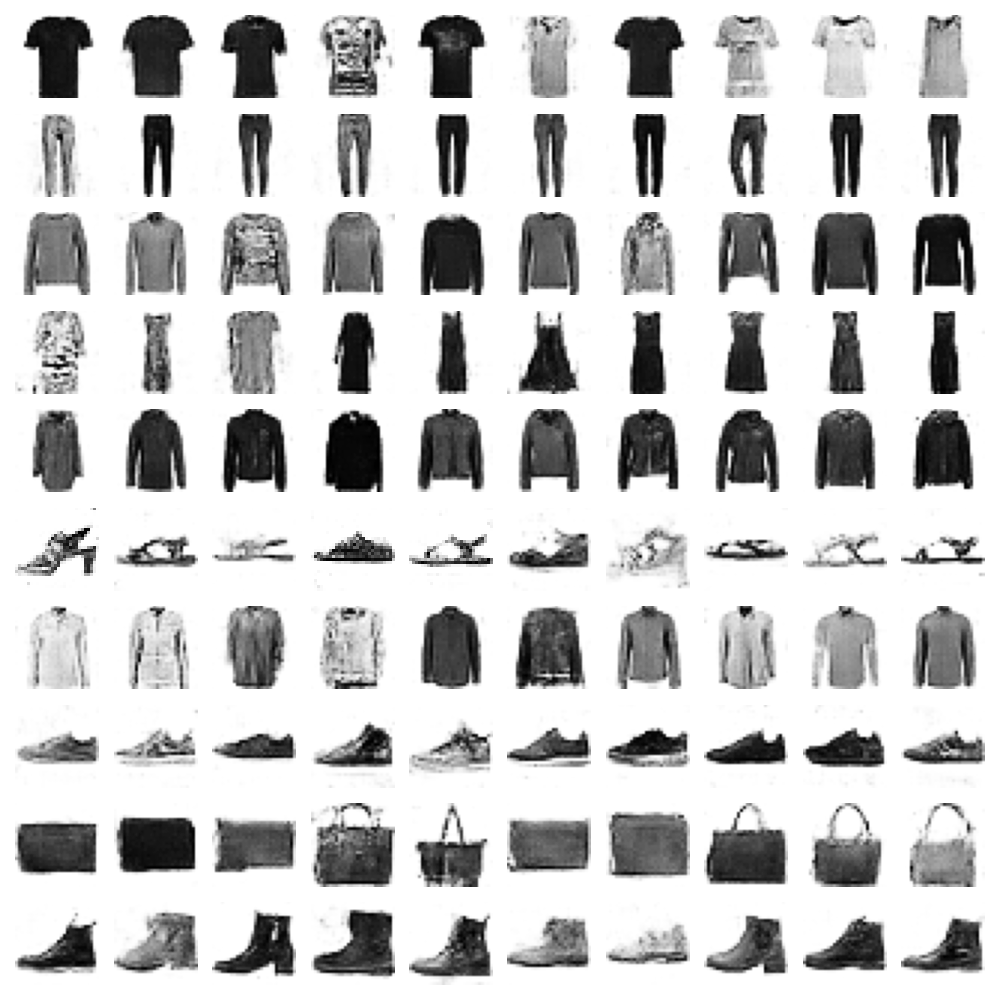} 
%   \caption{GAN}
\end{subfigure}
\begin{subfigure}{.23\textwidth}\centering
  \includegraphics[width=1\textwidth]{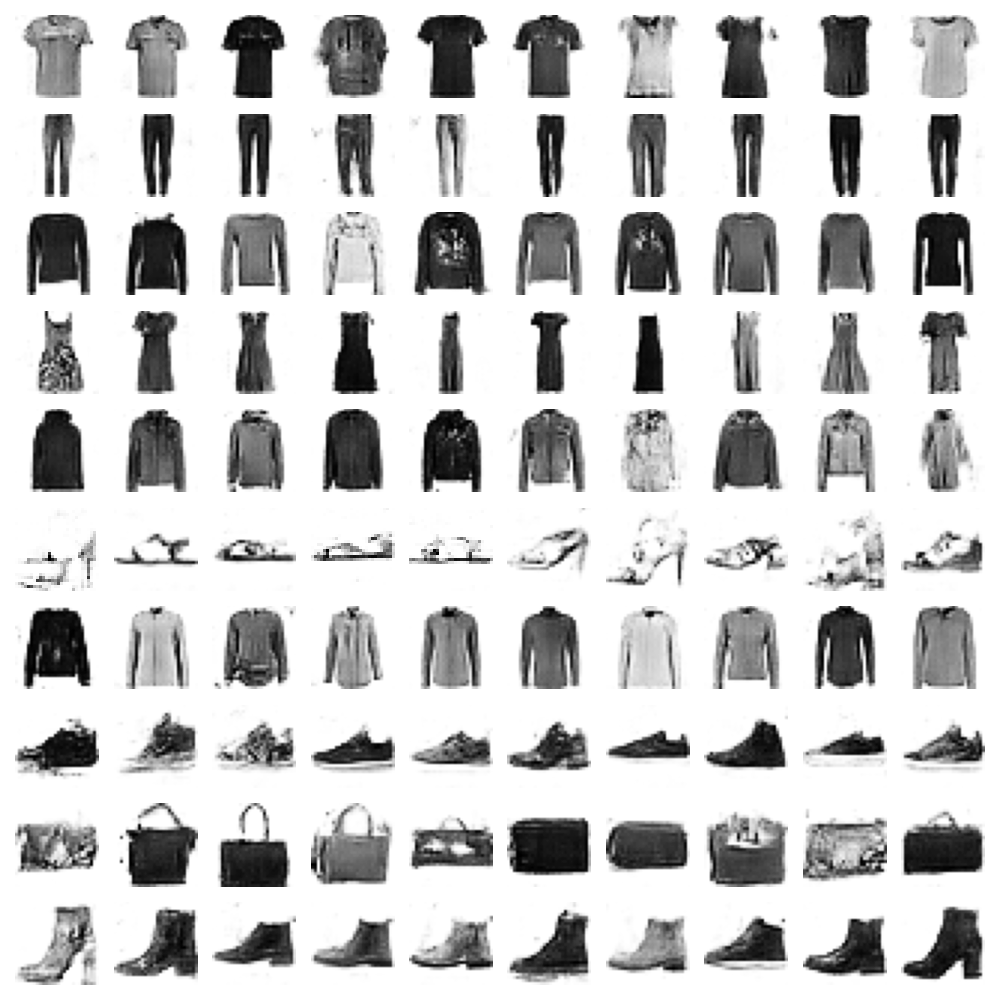}
%   \caption{PIGAN ($\lambda=1$)}
\end{subfigure}
\begin{subfigure}{.23\textwidth}\centering
  \includegraphics[width=1\textwidth]{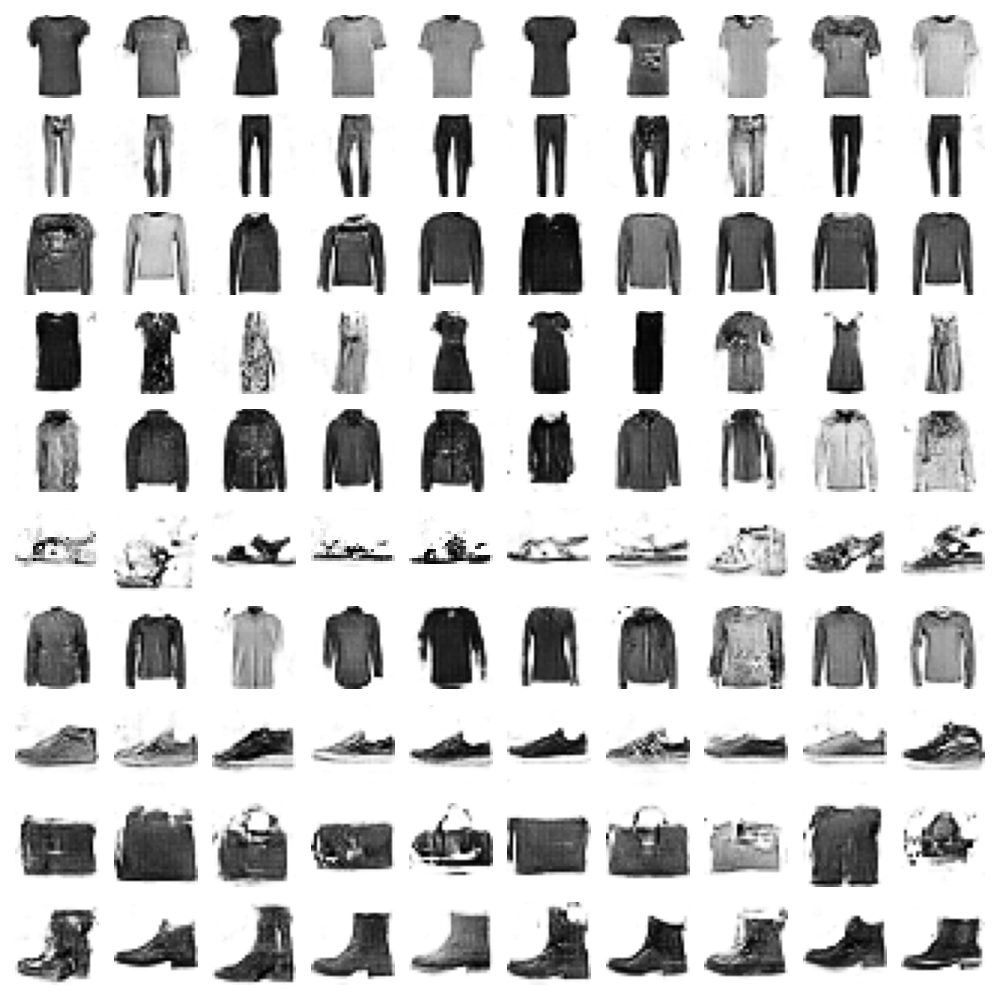} 
%  \caption{PIGAN ($\lambda=10$)}
\end{subfigure}
\begin{subfigure}{.23\textwidth}\centering
  \includegraphics[width=1\textwidth]{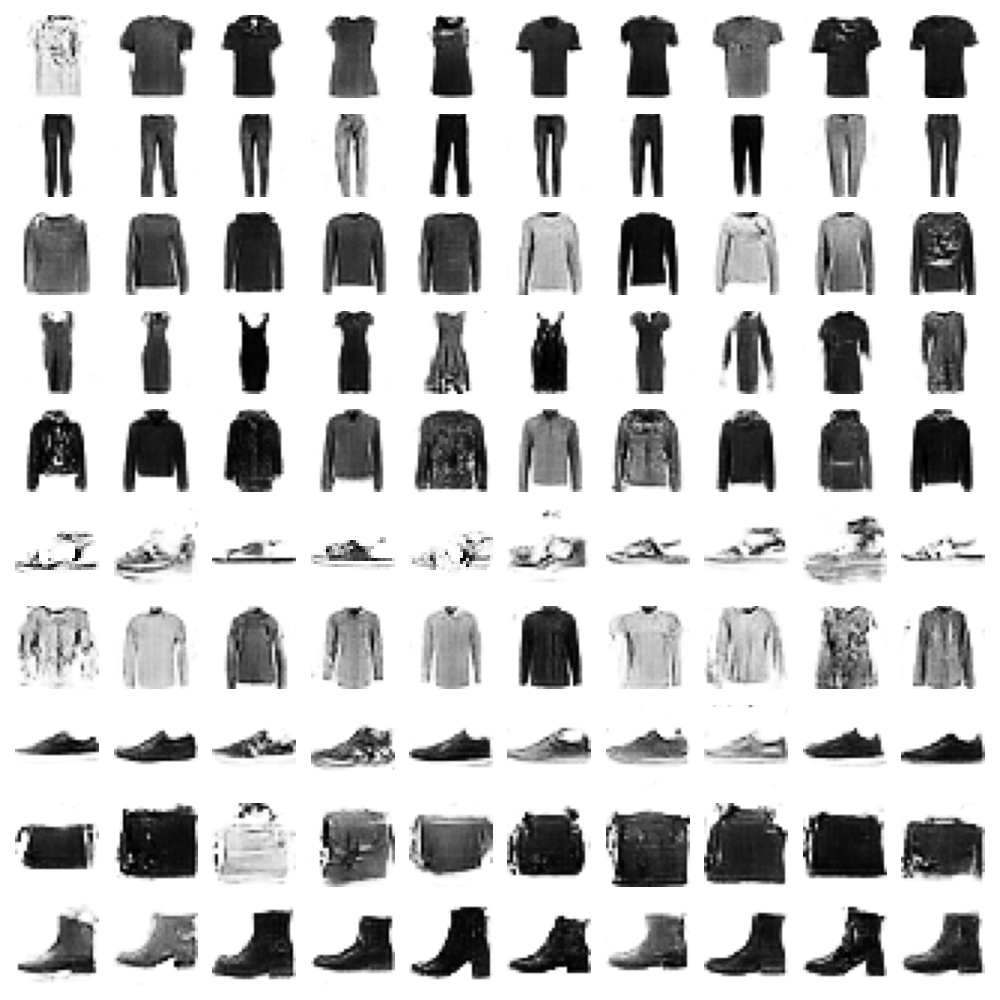}
%  \caption{PIGAN ($\lambda=20$)}
\end{subfigure}
\\
\begin{subfigure}{.23\textwidth}\centering
  \includegraphics[width=1\textwidth]{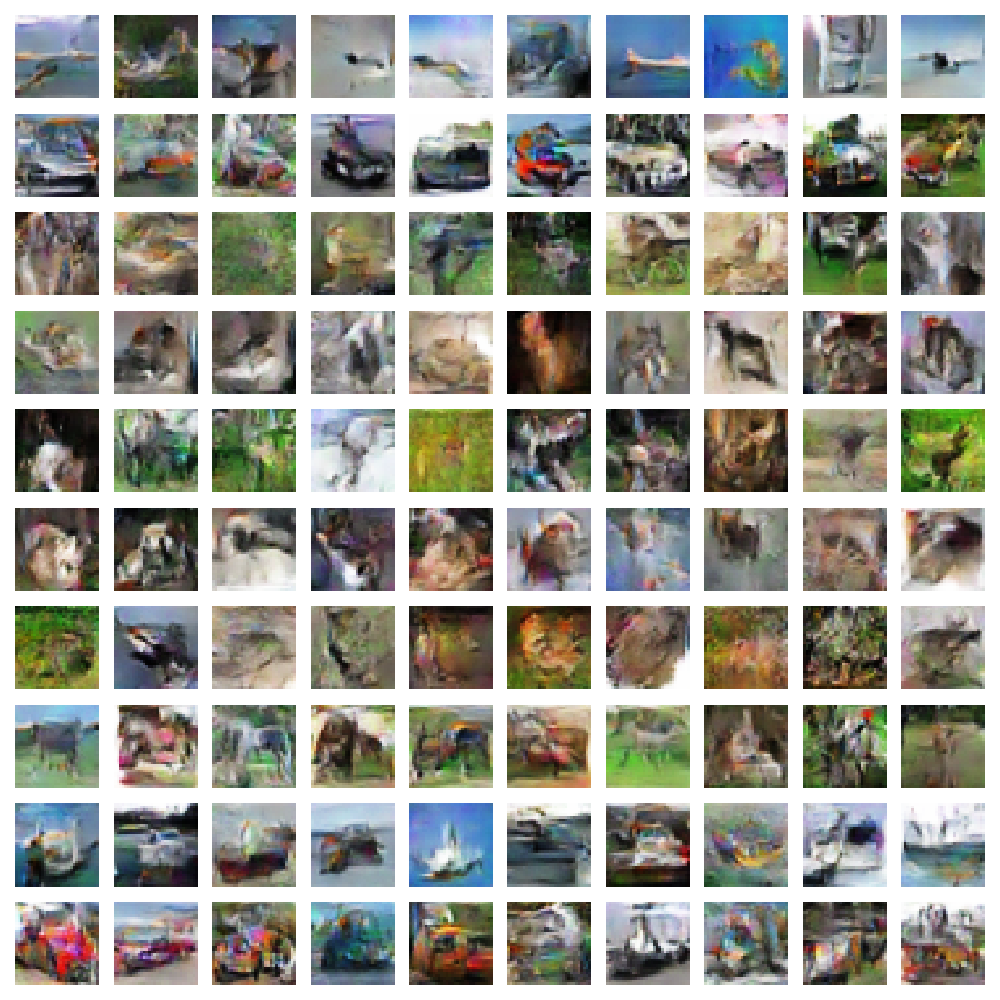}   \caption{GAN}
\end{subfigure}
\begin{subfigure}{.23\textwidth}\centering
  \includegraphics[width=1\textwidth]{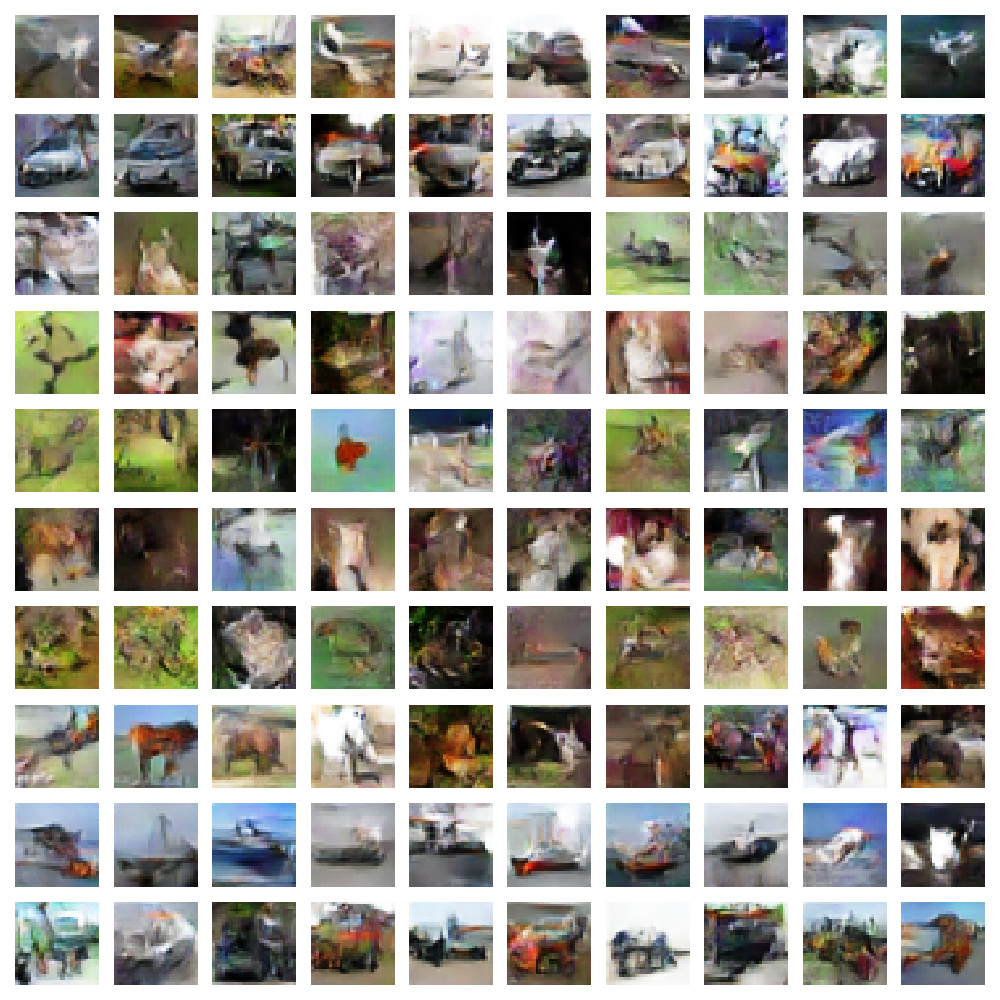}
  \caption{PIGAN ($\lambda=1$)}
\end{subfigure}
\begin{subfigure}{.23\textwidth}\centering
  \includegraphics[width=1\textwidth]{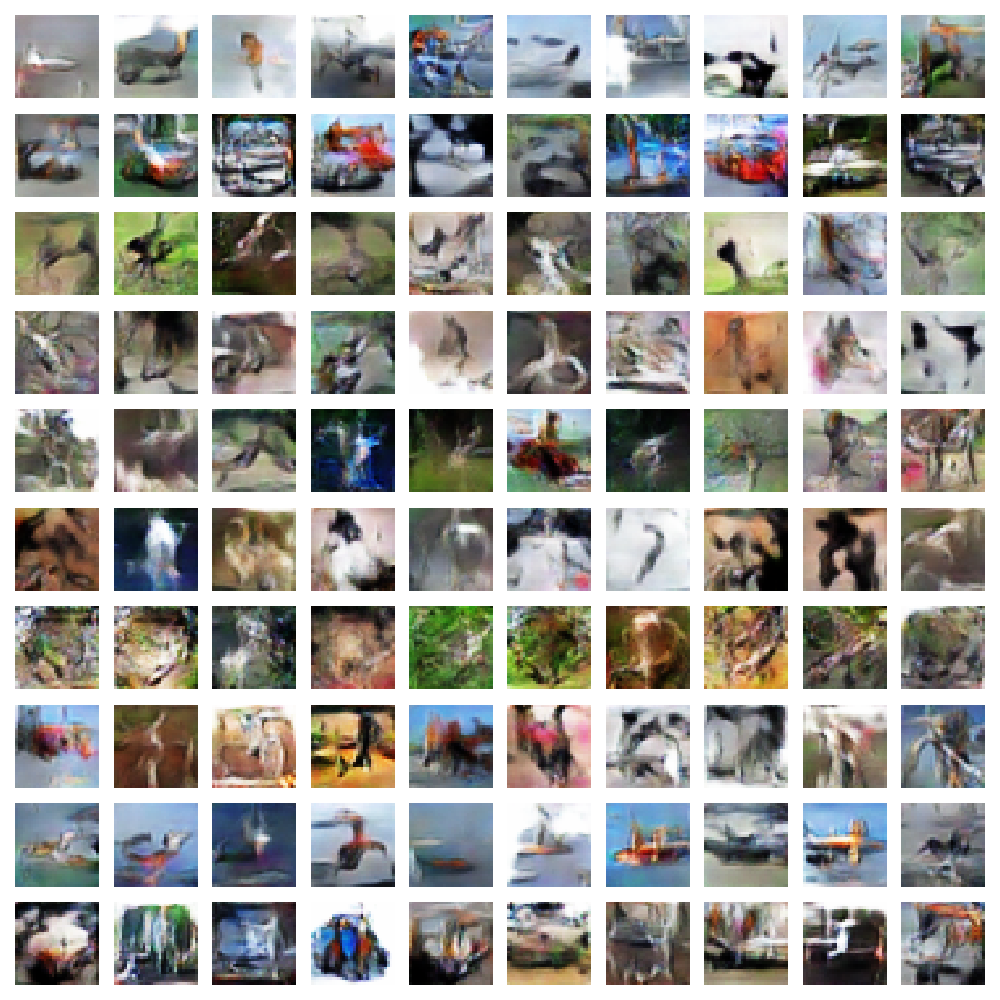} 
 \caption{PIGAN ($\lambda=10$)}
\end{subfigure}
\begin{subfigure}{.23\textwidth}\centering
  \includegraphics[width=1\textwidth]{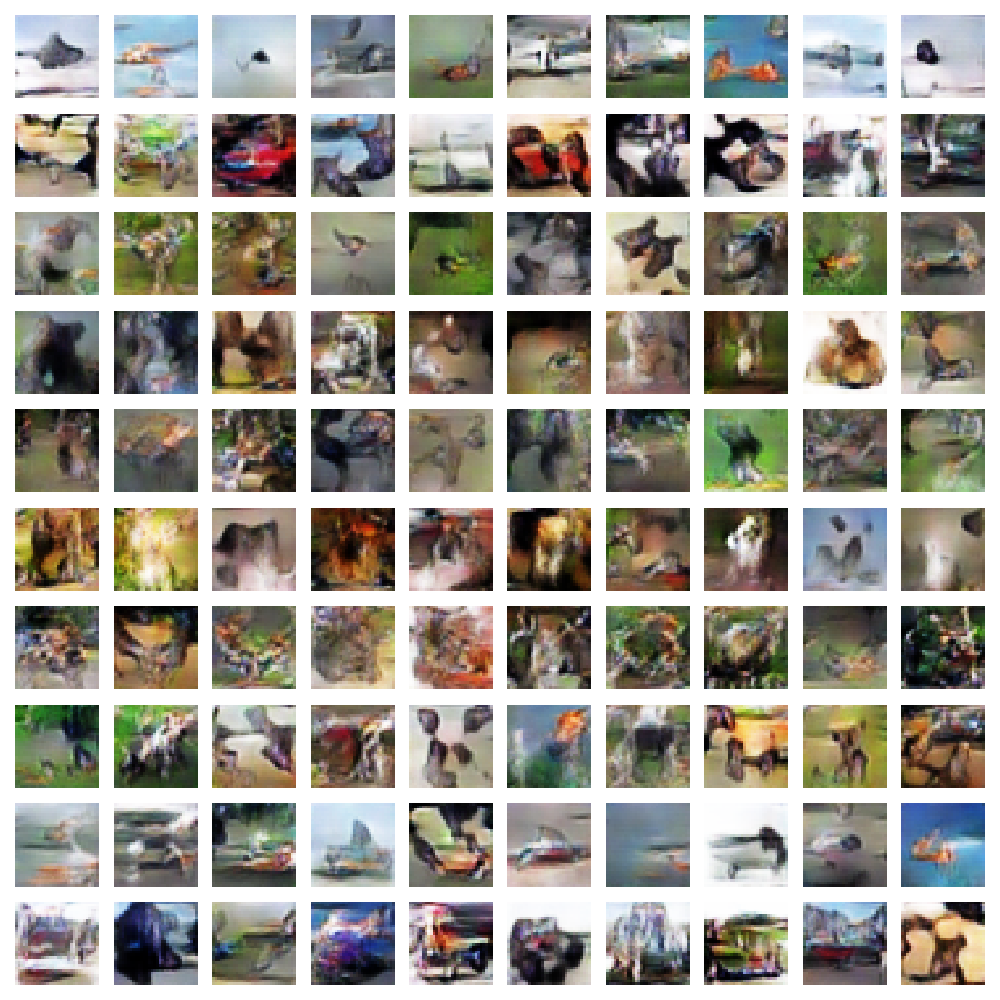}
 \caption{PIGAN ($\lambda=20$)}
\end{subfigure}

\vspace{-2mm}
\caption{Samples generated by PIGAN with various $\lambda$ compared to non-private GAN.}
\label{fig:samples lambda}
\end{figure*}

 \subsection{Comparison of the privacy-fidelity trade-off}\label{app:tradeoff}
 In Fig.~\ref{fig:tradeoff-rest}, we compare the different private models in terms of privacy and fidelity measures not presented in the main paper. As with the trade-offs observed in Fig.~\ref{fig:tradeoff}, the two left curves for both Fashion-MNIST and CIFAR-10 are almsot always higher than the curves of other private methods, and the two right curves are lower. It is also observed that PIGAN covers a wider privacy-level range compared to PrivGAN and especially DPGAN. For a given WB attack success rate, PIGAN is able to generate images with higher Inception score, i.e., better sample quality. Similarly, for a given MC-Set and MC-Single attack success rate, PIGAN results in better downstream classification performance and achieves lower FID and Intra-FID scores, respectively.  %Table~\ref{tb:comparison2} provides additional evaluation metrics (corresponding to Table~2 of the main paper) for comparing the various methods when models are trained to achieve around $17\%$ and $23$-$26\%$ WB attack success rates on Fashion-MNIST and CIFAR-10, respectively.
 It is worth noting that as discussed in  \cite{borji2019pros} and \cite{liu2018improved}, both IS and FID score have limitations especially when evaluating class-conditional models. While we use Intra-FID to evaluate generated image quality in our experiments, other metrics for conditional generative models include Class-Aware Fr\'echet Distance (CAFD) \cite{liu2018improved}, Fr\'echet Joint Distance (FJD) \cite{devries2019evaluation} and Conditional IS and Conditional FID scores \cite{benny2021evaluation}.

 We visually compare the generated sample quality of different private models in Figs.~\ref{fig:fmnist compare} and \ref{fig:cifar10 compare}. For Fashion-MNIST it can be observed that PIGAN generates better samples visually mostly noticeable in categories such as sandals and bags. Samples generated with DPGAN have very poor quality especially in terms of color and item diversity. However, as expected, the discriminator distributions for DPGAN look almost identical confirming the strong privacy guarantees provided by differentially private trained models. Since we use class-conditional models in our experiments, we do not observe the mode-collapse resulting from high regularization values of $\lambda$ for PrivGAN, which was pointed out in \cite{mukherjee2021privgan}. From comparing the generated CIFAR-10 images in Fig.~\ref{fig:cifar10 compare}, we observe that despite the high regularization, PIGAN is able to generate images resembling those generated by non-private GAN in some of the categories, while PrivGAN generates images with lower quality overall. DPGAN is not able to generate images with reasonable visual quality, and as in Fashion-MNIST there is much less diversity in terms of color and objects within each class. 
%comment that we use the conditional architecture and do not see any mode collapse as in the PrivGAN paper as the regularization is increased. Also, in their downstream classification they train for each class separately which will overcome that issue! Not really the best way!
\setlength{\textfloatsep}{1pt}
 \begin{figure*}[!hb]%[ht!]
\centering
\begin{subfigure}{.98\textwidth}
  \centering
  % include first image
  \includegraphics[width=1\textwidth]{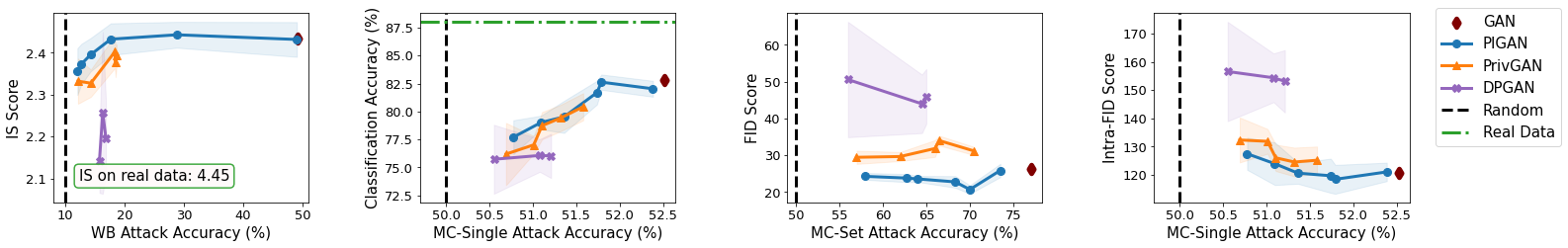} 
  \vspace{-5mm}
  \caption{Fashion-MNIST}
  \label{fig:sub-first}
\end{subfigure}
\\
\begin{subfigure}{.98\textwidth}
  \centering
  % include second image
  \includegraphics[width=1\textwidth]{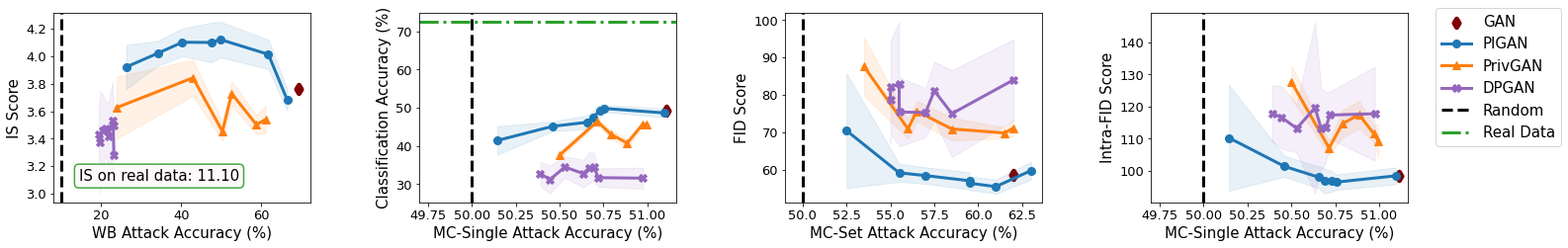}  
  \vspace{-5mm}
  \caption{CIFAR-10}
  \label{fig:sub-second}
\end{subfigure}
\vspace{-2mm}
\caption{Privacy-fidelity trade-off  achieved with different private models.}
\label{fig:tradeoff-rest}
\vspace{-6mm}
\end{figure*}

\begin{figure*}[!h]\centering
\begin{subfigure}{.23\textwidth}\centering
  \includegraphics[width=1\textwidth]{figures/fmnist/fmnist_gan_40.png} 
%   \caption{GAN}
\end{subfigure}
\begin{subfigure}{.23\textwidth}\centering
  \includegraphics[width=1\textwidth]{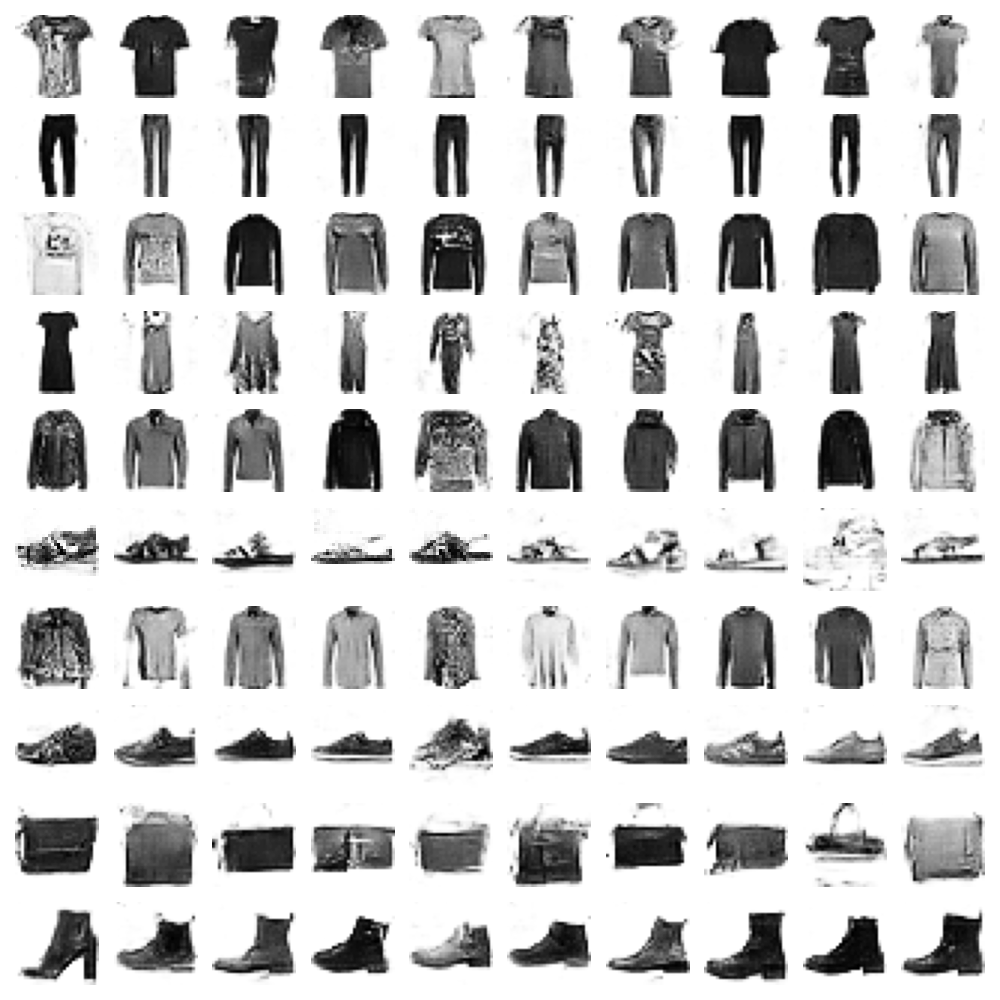}
%   \caption{PIGAN ($\lambda=1$)}
\end{subfigure}
\begin{subfigure}{.23\textwidth}\centering
  \includegraphics[width=1\textwidth]{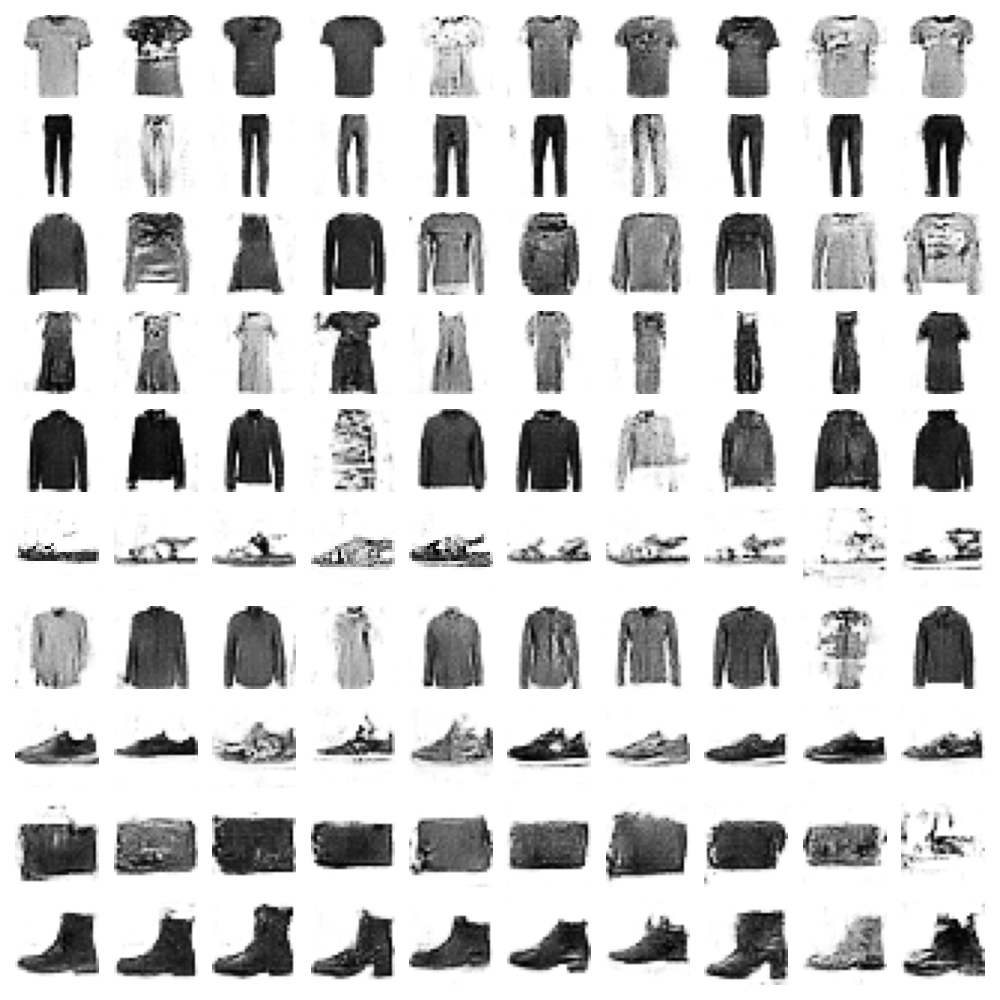} 
%  \caption{PIGAN ($\lambda=10$)}
\end{subfigure}
\begin{subfigure}{.23\textwidth}\centering
  \includegraphics[width=1\textwidth]{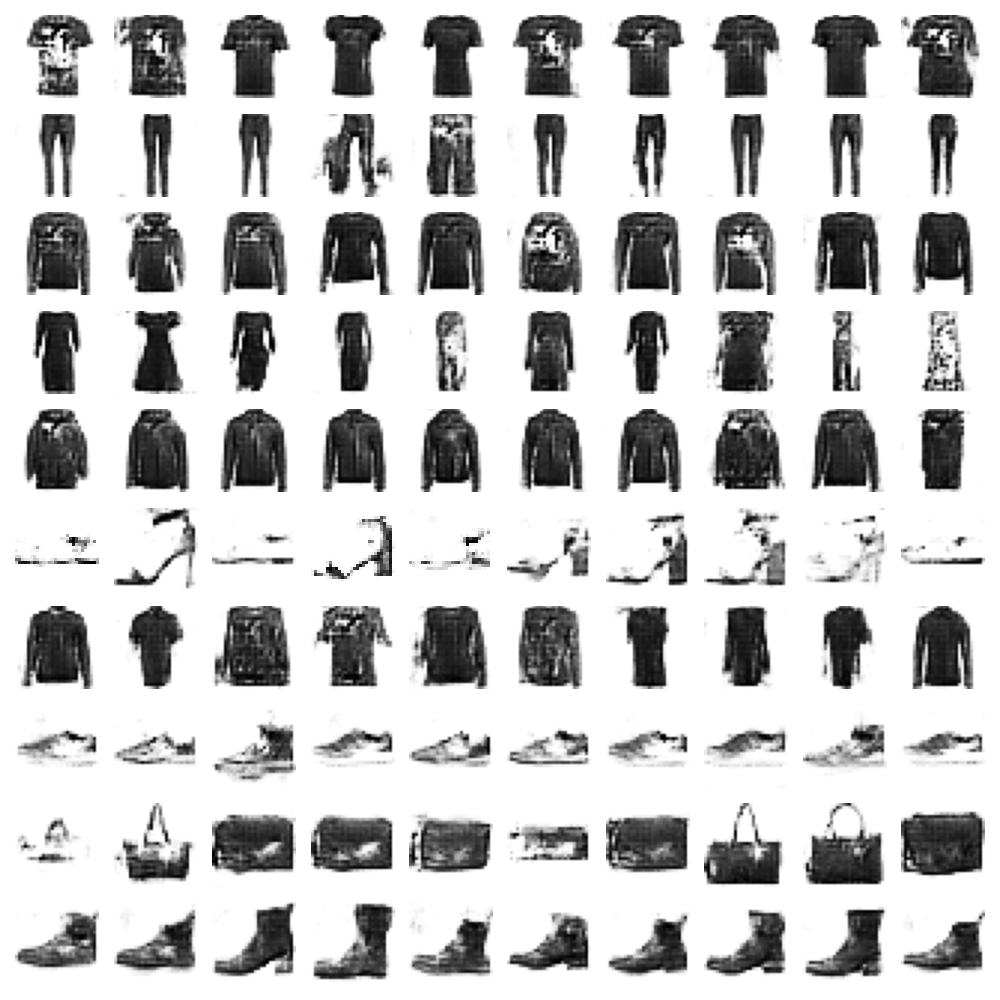}
%  \caption{PIGAN ($\lambda=20$)}
\end{subfigure}
\\
\begin{subfigure}{.23\textwidth}\centering
  \includegraphics[width=1\textwidth]{figures/fmnist/fmnist_gan_40_dist.png}   \caption{GAN}
\end{subfigure}
\begin{subfigure}{.23\textwidth}\centering
  \includegraphics[width=1\textwidth]{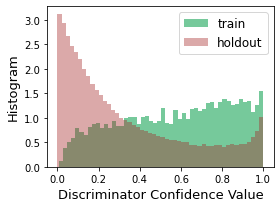}
  \caption{PIGAN ($\lambda=5$)}
\end{subfigure}
\begin{subfigure}{.23\textwidth}\centering
  \includegraphics[width=1\textwidth]{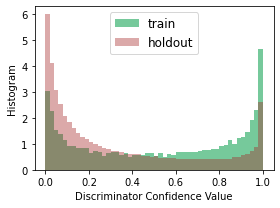} 
 \caption{PrivGAN ($\lambda=0.1$)}
\end{subfigure}
\begin{subfigure}{.23\textwidth}\centering
  \includegraphics[width=1\textwidth]{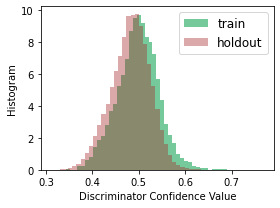}
 \caption{DPGAN {\scriptsize ($\sigma^2=0.5, c_p=1.2$)}}
\end{subfigure}

\vspace{-2mm}
\caption{Fashion-MNIST: Comparison between private models in terms of generated sample quality and discriminator confidence score distribution. Models are trained to achieve similar WB attack accuracy reported in Table~\ref{tb:comparison}.}
\label{fig:fmnist compare}
\end{figure*}

\begin{figure*}[!h]\centering
\begin{subfigure}{.23\textwidth}\centering
  \includegraphics[width=1\textwidth]{figures/cifar10/cifar10_gan_50.png} 
%   \caption{GAN}
\end{subfigure}
\begin{subfigure}{.23\textwidth}\centering
  \includegraphics[width=1\textwidth]{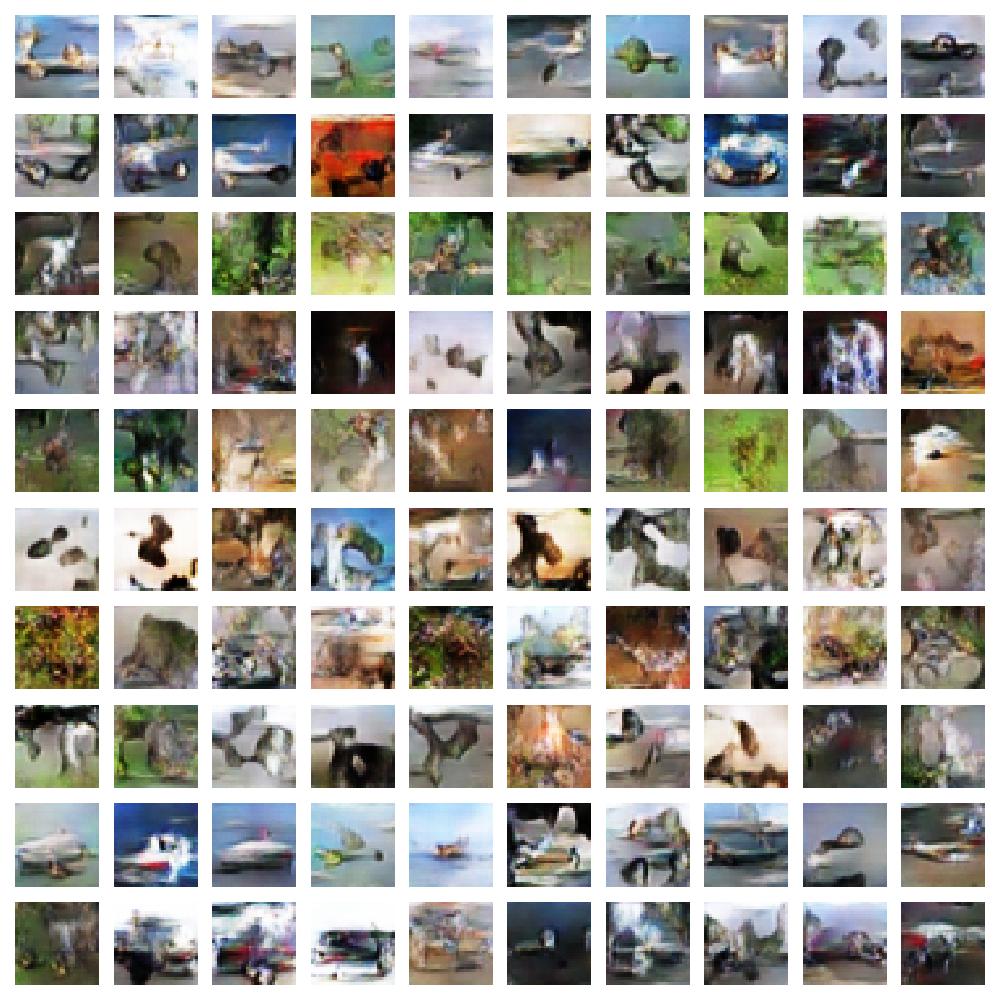}
%   \caption{PIGAN ($\lambda=1$)}
\end{subfigure}
\begin{subfigure}{.23\textwidth}\centering
  \includegraphics[width=1\textwidth]{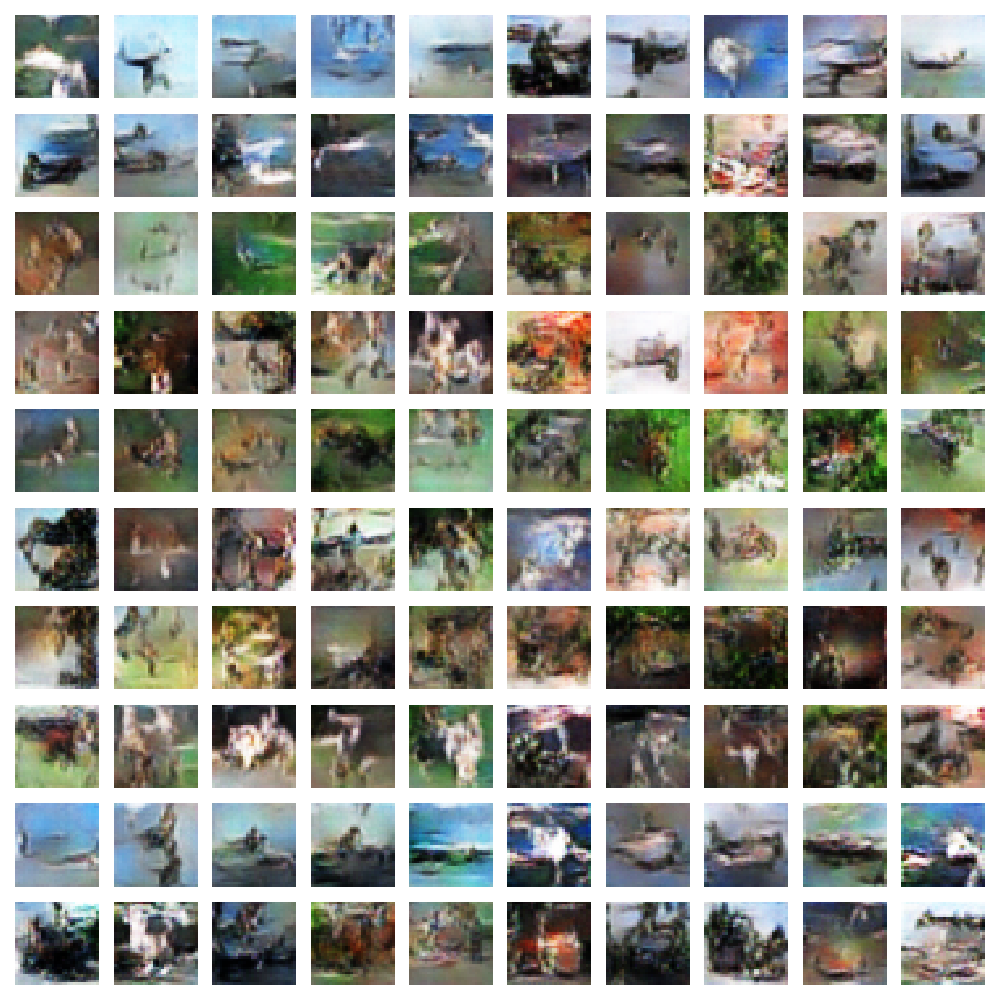} 
%  \caption{PIGAN ($\lambda=10$)}
\end{subfigure}
\begin{subfigure}{.23\textwidth}\centering
  \includegraphics[width=1\textwidth]{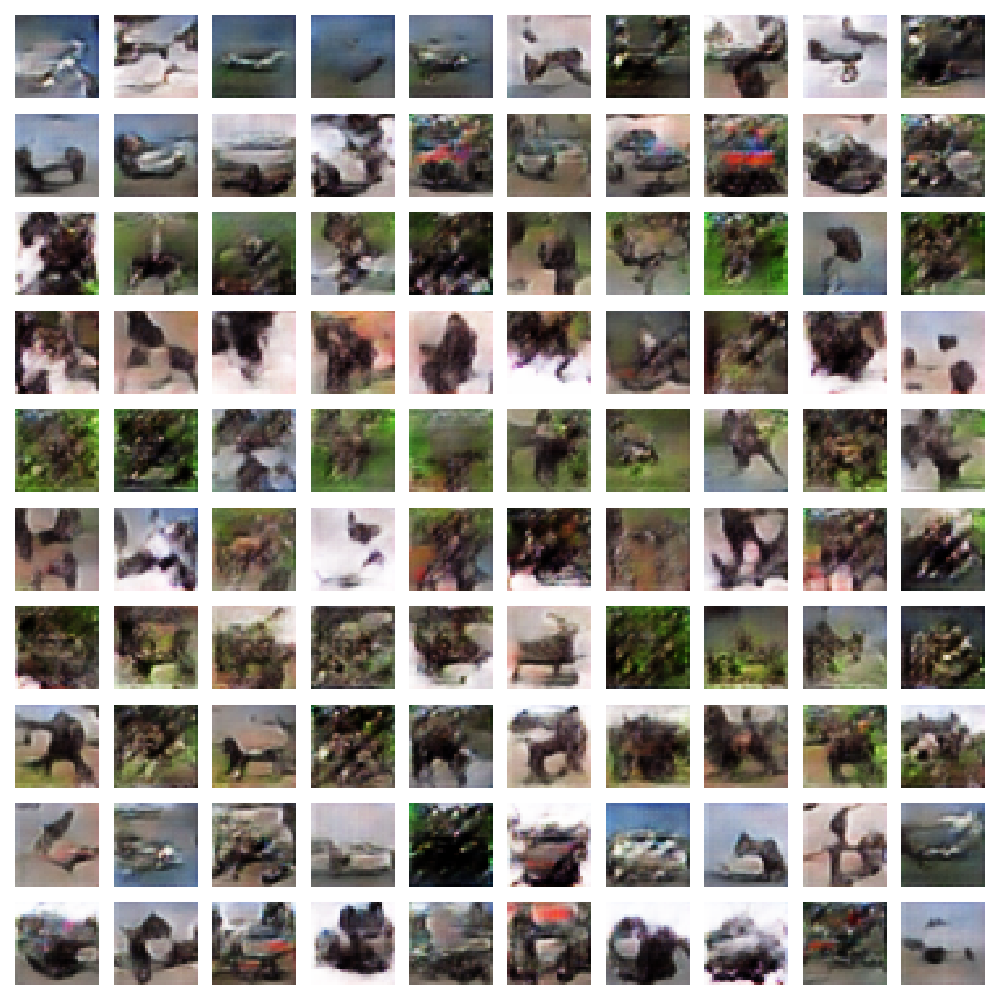}
%  \caption{PIGAN ($\lambda=20$)}
\end{subfigure}
\\
\begin{subfigure}{.23\textwidth}\centering
  \includegraphics[width=1\textwidth]{figures/cifar10/cifar10_gan_50_dist.png}   \caption{GAN}
\end{subfigure}
\begin{subfigure}{.23\textwidth}\centering
  \includegraphics[width=1\textwidth]{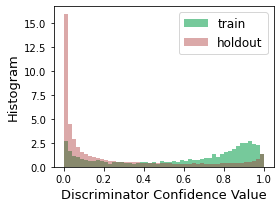}
  \caption{PIGAN ($\lambda=30$)}
\end{subfigure}
\begin{subfigure}{.23\textwidth}\centering
  \includegraphics[width=1\textwidth]{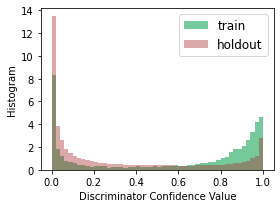} 
 \caption{PrivGAN ($\lambda=20$)}
\end{subfigure}
\begin{subfigure}{.23\textwidth}\centering
  \includegraphics[width=1\textwidth]{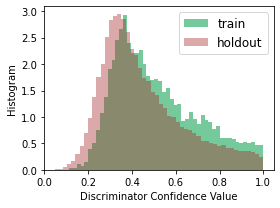}
 \caption{DPGAN {\scriptsize ($\sigma^2=0.7, c_p=1.5$)}}
\end{subfigure}

\vspace{-2mm}
\caption{CIFAR-10: Comparison between private models in terms of generated sample quality and discriminator confidence score distribution. Models are trained to achieve similar WB attack accuracy reported in Table~\ref{tb:comparison}.}
\label{fig:cifar10 compare}
\end{figure*}

\end{document}